\documentclass[letterpaper, 10 pt, conference]{ieeeconf}  

\IEEEoverridecommandlockouts                              

\usepackage{nomencl}
\makenomenclature
\usepackage{amssymb}
\usepackage{amsmath}
\usepackage{graphicx}
\usepackage{xcolor}
\usepackage{subcaption}
\usepackage{hyperref}
\usepackage{algorithm,algcompatible}
\usepackage{algorithm}
\usepackage{algorithmicx}
\usepackage{algpseudocode}
\usepackage{algpseudocode}
\newcommand{\norm}[1]{\left\lVert#1\right\rVert} 

\newcommand{\R}{\mathbb{R}}
\newcommand{\C}{\mathcal{C}}
\newcommand{\M}{\mathcal{M}}
\newcommand{\T}{\mathcal{T}}

\newtheorem{theorem}{Theorem}

\newtheorem{lemma}{Lemma}
\newtheorem{remark}{Remark}
\newtheorem{proposition}{Proposition}

\title{\LARGE \bf
Betting for Sim-to-Real Performance Certificates}

\author{Yujia Chen$^1$, Bowen Weng$^1$
\thanks{$^{1}$Department of Computer Science, Iowa State University, Ames, IA 50011, USA
;{\tt\footnotesize yjchen@iastate.edu,bweng@iastate.edu}}
\thanks{This work was supported in part by financial assistance award 70NANB25H125 from U.S. Department of Commerce, National Institute of Standards and Technology, and in part by the National Science Foundation under Grant CPS-2550813.}
\thanks{The program that allows the exact reproduction of all figures in this paper can be found at the open-source code repository \href{https://github.com/ISUSAIL/Bet4Sim2Real-Certificate}{https://github.com/ISUSAIL/Bet4Sim2Real-Certificate}.}
}

\begin{document}

\maketitle

\begin{abstract}
Consider a typical \emph{test} of a robot system: one observes a sequence of outcomes concerning some aspect of interest (crash or no crash, tracking error, time to completion), and reports a mean (crash risk, average error, mean time to completion) and, more importantly, an interval guaranteed to contain that mean at a prescribed confidence, referred to as a \emph{performance certificate}. Given expensive real-world trials, the sample size is therefore small, and the certificate is often loose. Now consider the same procedure, except that before each real outcome is revealed, the operator ``peeks'' at a large bank of simulated results, and places a \emph{bet} on where the real outcome will land. As the real outcomes settle the bets, the operator gains or loses wealth. One's ``trust'' over simulators also shifts within the portfolio. This paper develops that idea into a sim-to-real betting certificate framework with three contributions: (i) An algorithm that links a scalable bank of simulators to effective bets, and the accumulated betting wealth to the certificate. (ii) A proof that the returned certificate is \emph{anytime valid}, covering the true mean with the prescribed probability, using any simulator bank. (iii) The guaranteed wealth-regret bounds yield configuration principles for the proposed algorithm and simulator bank design to deliver tight certificates. Experiments across synthetic distributions and real-world robot tests, covering both replayed standardized testing outcomes and online runtime evaluation, show the proposed method narrows the certificate by $51.6\%\pm16\%$ against classic and state-of-the-art baselines, and by $32.26\%\pm8\%$ in the extremely limited-sample regime ($\leq30$ samples).
\end{abstract}

\textbf{Notation: } The set of real numbers and positive integers are denoted by $\R$ and $\mathbb{Z}$, respectively. $\mathbb{Z}_k = \{1,\ldots,k\}$ for some $k\in\mathbb{Z}$. $\norm{\cdot}$ denotes the $\ell_2$-norm. For $A \subset \mathbb{R}$, $\text{hull}(A)=[\inf A, \sup A]$ (i.e., the closed interval hull).

\section{Introduction}\label{sec:intro}

Consider accurately, efficiently, and fairly evaluating a certain robot in the \emph{real} world, on \emph{real} hardware, and sometimes in \emph{real} time, concerning a particular aspect of its performance (e.g., risk estimation~\cite{chen2026betting,vincent24stochastic,weng26repeatability} and empirical reward evaluation of learned policies~\cite{peng18rand}). A fundamental tension often arises between \emph{what one wants}: a tight statistical bound that quantifies uncertainty about the performance measure at a prescribed confidence level, referred to as a \emph{performance certificate} (e.g., sufficiently tight lower and upper confidence bounds on risk), and \emph{what one can afford}: only a limited number of costly real-world tests from which to construct such a certificate. \textbf{The central idea of this paper is to leverage increasingly abundant and inexpensive \emph{simulators}, through the lens of \emph{betting}, to facilitate \emph{provably guaranteed} performance certification while reducing reliance on costly real-world tests}.

Formally speaking, consider a probability distribution $P$ over the measurable space $(\mathcal{X},\mathcal{F})$, representing the real-world data-generating distribution over the robot and environment conditions encountered during testing (e.g., the object pose in a pick-and-place task~\cite{nist24continuous,snyder26beyondbinary} and the friction and damping parameters affecting a navigation task~\cite{peng18rand}). Given a bounded performance scoring function $\psi:\mathcal{X} \rightarrow \M\subset\R$, we define the performance evaluation target as the scalar expectation (e.g., the expected success rate of a manipulator completing a pick-and-place task~\cite{badithela2026suresim,snyder26beyondbinary,chen2026betting}), along with its variance: 
\begin{equation}\label{eq:mustar}
    \mu^* := \mathbb{E}_{x \sim P}[\psi(x)],\ \sigma^2:= \mathbb{E}_{x \sim P}\big[(\psi(x)-\mu^*)^2\big].
\end{equation}
For the remainder of the paper, we take $\M\!\subseteq\![0,1]$. With proper normalization, this covers performance measures that are naturally bounded (e.g., risk and success rate), as well as measures that may be unbounded in principle but have justifiable limits under a specified testing envelope~\cite{weng26repeatability} (e.g., the tracking error of a robot operating within prescribed velocity and workspace limits).

\subsection{Probabilistic Certificate (without ``Simulators'')}\label{sec:intro:probci}
Given $\mu^*$ is unknown and cannot be observed directly, one approximation is through Monte Carlo sampling~\cite{hammersley1964monte,wasserman04zt}:
\begin{equation}\label{eq:mc}
    \hat{\mu}_t = \frac{1}{t}\sum_{i=1}^t \psi(x_i), \ x_i \overset{\mathrm{i.i.d.}}{\sim}P,
\end{equation}
whose empirical variance often appears in two forms: (i) the uncorrected $\hat{\sigma}_t^2:=\frac{1}{t}\sum_{i=1}^{t}[\psi(x_i)-\hat{\mu}_t]^2$, and (ii) the Bessel-corrected, unbiased sample variance $\hat{\sigma}_{t,u}^2:=\frac{1}{t-1}\sum_{i=1}^{t}[\psi(x_i)-\hat{\mu}_t]^2, t\ge 2$.

Although $\lim_{t\rightarrow \infty}\hat{\mu}_t\!=\!\mu^*$ (by the law of large numbers~\cite{hammersley1964monte,wasserman04zt}), $\hat{\mu}_t$ may substantively deviate from $\mu^*$ within the \emph{finite}, or even worse, extremely limited sampling regime, which is common in robot testing practice~\cite{vincent24stochastic,weng26repeatability,snyder26beyondbinary}.

This is where the \emph{certificate} (also referred to as a \emph{bound} interchangeably throughout this paper) fits in as an interval, $\C=[r_l,r_u]\subset\mathbb{R}$, $r_l, r_u \in \R$, $r_l \le r_u$, providing the range of candidate values for the unknown $\mu^*$. The \emph{width} is defined as $r_u - r_l$ as an indication of the \emph{tightness} of the certificate. 

In practice, the construction of $\C$ often proceeds in a sampling-based fashion as shown in~\eqref{eq:mc}, and hence often takes a \emph{probabilistic} form as
\begin{equation}\label{eq:certificate}
\mathbb{P}(\mu^* \in \C) \ge 1 - \alpha.
\end{equation}
In the remainder of this paper, unless specified otherwise, we take $\alpha = 0.05$ (i.e., a confidence level of $95\%$).

\subsubsection{Concentration inequality based certificate}\label{sec:intro:probci:concen}

Among various methods to construct a probabilistic certificate~\cite{hoeffding1963probability,maurer2009empirical,waudbysmith2022wsr,wasserman04zt}, one way immediately following~\eqref{eq:mc} is to apply a certain \emph{concentration inequality}~\cite{hoeffding1963probability,wasserman04zt,maurer2009empirical} that guarantees,
\begin{equation}\label{eq:concentration}
    \mathbb{P}(\mu^* \in [\hat{\mu}_t-\gamma,\hat{\mu}_t+\gamma]) \ge 1-\delta(\gamma,t), \forall \gamma>0.
\end{equation}
$\delta(\gamma,t)$ is a function specified by the particular inequality (e.g., the Hoeffding's~\cite{hoeffding1963probability}). To obtain a certificate with confidence level at least $1-\alpha$, it must satisfy $\delta(\gamma,t)\le\alpha$. 

The Hoeffding's inequality is one instance of~\eqref{eq:concentration}, using the known range of observations, sometimes the worst-case variance of $\psi(x_t)$. Setting $\delta(\gamma, t) = \alpha$ and solving for $\gamma$ yields $\gamma = \sqrt{\frac{\ln(2/\alpha)}{2t}}$. 

One other example, the empirical Bernstein inequality~\cite{maurer2009empirical,weng26repeatability}, additionally incorporates the empirical variance $\hat{\sigma}_t^2$, yielding $\gamma = \sqrt{\frac{2\hat\sigma_t^2 \ln(2/\alpha)}{t}}+ \frac{7\ln(2/\alpha)}{3(t-1)}$. 

\subsubsection{Hypothesis test based certificate}\label{sec:intro:probci:hypothe}
An alternative route to construct $\C$ is through \emph{hypothesis testing}. Rather than bounding the deviation of $\hat{\mu}$ from $\mu^*$, this approach evaluates individual candidate claims $\mu$ across $\M$. Specifically, for each candidate value $\mu \in \M$, one tests the null hypothesis $H_0:\mu^*=\mu$ by computing a method-specific \emph{test statistic} from the observed samples and comparing it against a threshold determined by the significance level $\alpha$. If the statistics exceeds the threshold, it means $H_0$ is rejected at significance level $\alpha$, hence the candidate value is excluded from the certificate. The certificate is then the set of all candidate values that are \emph{not} rejected at level $\alpha$, i.e.,
\begin{equation}\label{eq:certif}
\!\C\!=\!\text{hull}\!\big(\!\left\{\mu\!\in\!\M : H_0\!(\!\mu^*\!=\!\mu)\! \text{ not rejected at level } \alpha \right\}\!\big).
\end{equation}

In practice, the decision of whether rejecting $H_0$ can be made from two perspectives, often referred to as the (i) \emph{p-value} approaches~\cite{wasserman04zt} and the (ii) \emph{e-value} or e-process approaches~\cite{Ramdas25eprocess,waudbysmith2022wsr}.

\paragraph{P-value based hypothesis test}\label{para:pvalue} One classic $p$-value based instance is the $t$-test, which assumes the observed samples are drawn from normal distribution and rejects $H_0$ at the significance level $\alpha$ when the test statistic $\T_t(\mu)=|\hat{\mu}_t-\mu|/(\hat{\sigma}_{t,u}/\sqrt{t})$ exceeds the critical value $t_{1-\frac{\alpha}{2},t-1}$, the $1-\alpha/2$ quantile of Student's $t$ distribution with $t-1$ degrees of freedom~\cite{wasserman04zt}. So the accepted candidate fits $\mathbb{P}(\T_t < t_{1-\frac{\alpha}{2},t-1}) \geq 1-\alpha$, and it can be reformatted as a certificate (at significance level $\alpha$) as
$\C_t^{\text{$t$-test}} = \left[\hat\mu_t \!-\! t_{1-\frac{\alpha}{2},\, t-1}\! \frac{\hat{\sigma}_{t,u}}{\sqrt{t}}, \hat\mu_t\! +\! t_{1-\frac{\alpha}{2}, t-1}\! \frac{\hat{\sigma}_{t,u}}{\sqrt{t}}\!\right]$.

One extension of $t$-test, referred to as the sequential $t$-test, takes repeated, pre-specified examinations/checks of the data. Among different variants of the sequential $t$-test~\cite{wasserman04zt}, in this paper, the test is evaluated at every $t \geq 2$ with significance level $\alpha/(t(t+1))$. 

A final p-value-based baseline considered in this paper is the $z$-test, which replaces the corrected sample standard deviation $\hat{\sigma}_{t,u}$ with a prespecified constant $\sigma_0$, and replaces the Student's $t$ critical value $t_{1-\alpha/2,t-1}$ with the standard normal critical value $z_{1-\alpha/2}$. The constant $\sigma_0$ may be a known population standard deviation. In our implementation, we use the worst-case standard deviation $\sigma_0=0.5$. This yields the certificate
\begin{equation}\label{eq:ztest-C}
    \C_t^{\text{$z$-test}} = \Big[\hat{\mu}_t-z_{1-\alpha/2}\sigma_0/\sqrt{t},\ \hat{\mu}_t+z_{1-\alpha/2}\sigma_0/\sqrt{t}\Big].
\end{equation}

Note the methods mentioned above are valid only at the specific, pre-determined $t$ used to compute the statistic. These test methods are distribution-dependent and valid only at predefined sample size(s), so adding additional looks (e.g., the p-hacking~\cite{Ramdas25eprocess}) or adapting the stopping rule based on the data invalidates the guarantee.

\paragraph{E-value based hypothesis test}\label{para:evalue} To test the hypothesis $H_0$ without a fixed $t$, the e-value approach constructs a so-called \emph{e-process}~\cite{Ramdas25eprocess}. Concretely, for a candidate $\mu$ of $H_0$, we define the process
\begin{equation}\label{eq:eprocess}
    M_t(\mu) = M_{t-1}(\mu)(1 + \lambda_t(\mu)(\psi(x_t)-\mu)), M_0(\mu) = 1.
\end{equation}
Let $\mathcal{F}_{t-1}$ denote the information available from observations $x_1, \dots, x_{t-1}$, $\lambda_t(\mu) \in (-\frac{1}{1-\mu},\ \frac{1}{\mu})$ is a predictable factor depending on $\mathcal{F}_{t-1}$, representing the degree of belief about the relation between the next observation and the hypothesis, ensuring $M_t(\mu)>0$. 

The process also satisfies $\mathbb{E}[M_t(\mu) \mid \mathcal{F}_{t-1}] \le M_{t-1}(\mu)$ under $H_0$, making $M_t(\mu)$ a nonnegative supermartingale. By Ville's inequality~\cite{ville1939etude}, such a process satisfies 
\begin{equation}\label{eq:ville}
    \mathbb{P}\bigg( \sup_{t \geq 1} M_t(\mu) \geq 1/\alpha \Big| H_0 \bigg) \leq \alpha,
\end{equation} 
which means $H_0$ is rejected when $M_t(\mu) \ge 1/\alpha$. 

To reject a false $H_0$ as early as possible, we want $M_t(\mu)$ to grow as fast as possible. This objective can be understood through a \emph{betting} metaphor: one can take the process $(M_t(\mu))_{t \ge 0}$ as a betting process, where $M_t(\mu)$ represents the accumulated \emph{wealth} of a gambler sequentially betting against $H_0$, and $\lambda_t(\mu)$ is the \emph{stake} placed before observing $x_t$. At each round, the gambler receives a payoff $\lambda_t(\mu)(\psi(x_t) - \mu)$: positive when $\psi(x_t)$ deviates from $\mu$ in the direction predicted by the sign of $\lambda_t(\mu)$, and negative otherwise. Under $H_0$, the payoffs are mean-zero, hence wealth does not grow in expectation. When $H_0$ is false, wealth tends to grow. 

The stake $\lambda_t(\mu)$ determines how quickly evidence accumulates: faster wealth growth rejects a false candidate $\mu$ sooner, tightening $\C_t$ with fewer real-world trials. Following Kelly's criterion~\cite{kelly1956new}, if the true moments $(\mu^*,\sigma^2)$ were known, the true-moment oracle would use $\lambda^*(\mu)=\operatorname{clip}_{\delta}\{\kappa(\mu^*-\mu)/[\sigma^2+(\mu^*-\mu)^2]\}$ (see~\eqref{eq:delta-clip}), where $\kappa\in(0,1]$ controls betting aggressiveness. We refer to this oracle as ``ideal Kelly''~\cite{chen2026betting}. Since the true moments are unknown, Waudby-Smith and Ramdas (WSR)~\cite{waudbysmith2022wsr} replace them with predictable empirical estimates:
      $\lambda_t^{\mathrm{raw}}(\mu)
      =
      \kappa
      \frac{\hat{\mu}_{t-1}-\mu}
      {\hat{\sigma}_{t-1}^2+(\hat{\mu}_{t-1}-\mu)^2}$.
 
Furthermore, for the e-process to remain non-negative, the log-wealth analysis further requires every wealth factor to be strictly positive. One clips the raw stake as 
\begin{equation}\label{eq:delta-clip}
\lambda_t(\mu)\!=\! \text{clip}_\delta\big(\lambda^{\text{raw}}_t(\mu)\big) = \text{clip}\left(\lambda^{\text{raw}}_t(\mu), -\tfrac{1-\delta}{1-\mu}, \tfrac{1-\delta}{\mu}\right)
\end{equation}

\begin{remark}\label{rmk:e-certificate}[The analytical inversion of e-process]
    To \emph{analytically} invert Ville's rejection rule~\eqref{eq:ville} into the desired certificate of~\eqref{eq:certif}'s form, we evaluate $M_t(\mu)$ on a fine grid, identify the outermost non-rejected candidates, and refine the two outer rejection boundaries by bisection until each bracket falls below a prescribed tolerance. The resulting certificate is the interval hull
 $\mathcal{C}_t^{\text{e}}\!=\!
      \Big[\!\inf\!\{\mu\colon \sup_{t\geq1}M_t(\mu)<1/\alpha\}, 
       \sup\!\{\mu\colon \sup_{t\geq1}M_t(\mu)<1/\alpha\}\!\Big]$.
\end{remark}
\begin{figure}[t]
    \centering
    \includegraphics[width=\linewidth]{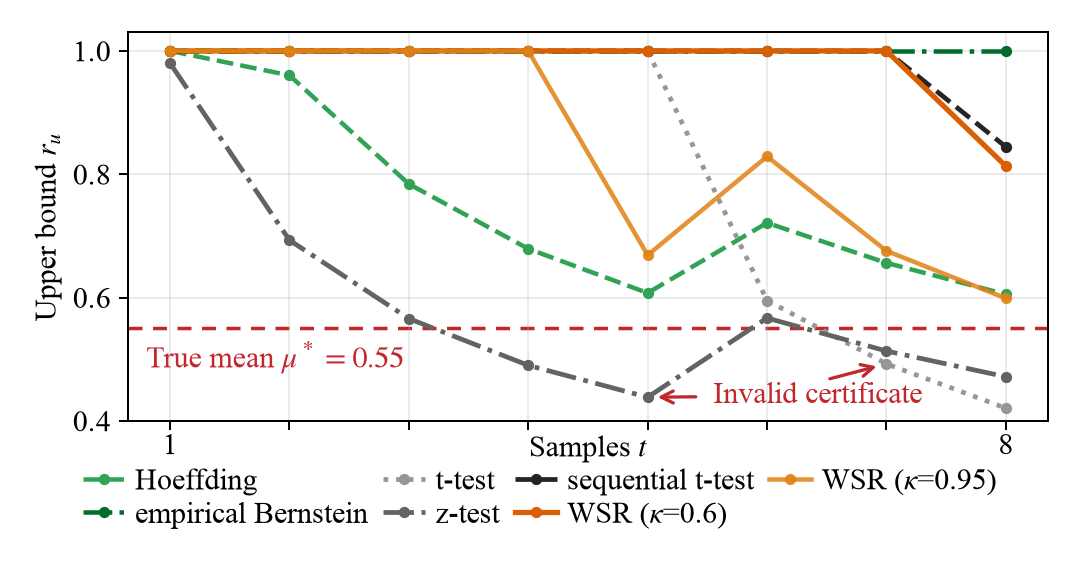}
    \vspace{-8mm}
    \caption{A sequence of $8$ sampled measures $\{0,0,0,0,0,1,0,0\}$ are generated from $x \sim \mathrm{Bernoulli}(0.55)$ (a toy example emulating a success rate certificate task of a peg-in-hole operation~\cite{nist24continuous}). The lower bounds of all certificates remain at zero across all sample sizes. The upper bound of each certificate is shown. Note p-value based methods occasionally have invalid coverage. All calculations are based on  Section~\ref{sec:intro:probci}.}
    \label{fig:example}
    \vspace{-5mm}
\end{figure}
Fig.~\ref{fig:example} summarizes the reviewed certificate methods in this section over a toy example with concrete values. 

Notably, all methods discussed above construct certificates \emph{solely} from real-world observations. Despite their different assumptions and trade-offs, certificate tightness, measured by the width of the resulting bounds, is fundamentally limited by the scarcity and cost of real-robot trials~\cite{liu2024curse,kalra2016driving}. This limitation has motivated a small but growing body of work~\cite{vincent24stochastic,luo25controlvariate,badithela2026suresim,chen2026betting,luo26x4val,mandyam26perry}, including this paper's proposal, to supplement real-world evidence with inexpensive and scalable robot simulations.

\subsection{Probabilistic Certificate (with ``Simulators'')}\label{sec:intro:sim2reali}
Consider a simulator-induced distribution $Q$ on the same measurable space as the real-world distribution $P$. A typical sim-to-real certification approach quantifies the discrepancy between $P$ and $Q$, then compensates for this gap when converting a certificate constructed from abundant samples of $Q$ into a bound on the real-world performance under $P$.

Vincent et al.~\cite{vincent24stochastic} \emph{assume} a one-sided sim-to-real gap and use simulation samples to construct a one-side upper bound $r_u$. To adapt to the certificate form of an interval as discussed in~\eqref{eq:certificate}, a lower bound $r_l$ can follow symmetrically under a separate reverse-gap assumption, as demonstrated in our released implementation. Consequently, the validity of Vincent et al. requires both assumed gaps to be correct, while conservative gap choices directly widen the resulting certificate. 

In contrast, Badithela et al.~\cite{badithela2026suresim} propose SureSim, which explicitly \emph{characterizes} the sim-to-real correction from \emph{paired} sim-real trials (e.g., moving the same red box from the same initial pose in simulation and reality and comparing task success or placement error). Such a ``pairing'' requirement is inherited from prediction-powered inference (PPI)~\cite{angelopoulos2023ppi}, where paired observations estimate the correction term and can reduce variance when simulation predictions are informative. However, obtaining such matched pairs can impose substantial practical constraints, particularly on robot testing, and may be impossible when no reliable real-to-simulation correspondence exists. A related method by Luo et al.~\cite{luo25controlvariate} likewise requires paired observations, using a control variate to reduce estimation variance. Their recent subsequent X4Val~\cite{luo26x4val} replaces the pairing with a learned surrogate, which still needs a prediction for every real trial, and is guaranteed only in the large-sample limit.

From a more fundamental aspect, the above approaches raise a natural simulator design question: what justifies the \emph{optimal} simulator for each approach (i.e., producing the tightest certificate for a fixed real-world testing budget)? For example, for Vincent et al., the optimal simulator would be the $Q$ whose gap from $P$ is identical to the assumed gap in the direction modeled by the bound. For SureSim, one favors a stable and predictable sim-to-real discrepancy, i.e., the optimal simulator is one for which the difference between paired real and simulated outcomes is constant across trials, or as nearly constant as possible. It is immediate that both views of optimality translates little into practical guidance for simulator design, selection, or calibration. Moreover, the approaches above rely on a \emph{single} simulation distribution $Q$ and scale primarily by drawing more samples from that same simulator (SureSim could explore multiple $Q$'s but only one has to be chosen for bias characterization). How to exploit a scalable family of parameterized simulators $\{Q_\theta:\theta\in\Theta\}$, including how to construct, select, and combine them for tighter certificates, therefore remains open. The proposed method is thus inspired.

\section{Main Method}\label{sec:main}
The proposed algorithm, presented in Section~\ref{sec:main:algorithm}, constructs e-process-based certificates through a novel sim-to-real betting mechanism inspired in part by prior work by Chen et al.~\cite{chen2026betting}, who use a scalable bank of simulators to select bets and form a bet-weighted \emph{point estimator} that can improve upon the Monte Carlo estimator in~\eqref{eq:mc}. Our work makes three significant advances. 

(i) First, it integrates simulator-guided betting with e-process hypothesis testing to produce a confidence sequence rather than only a point estimate. 

(ii) Second, the coverage result in Section~\ref{sec:main:coverage} establishes that this confidence sequence is simultaneously valid over time whenever the stakes are predictable and safely clipped, regardless of simulator quality. 

(iii) Finally, we analyze when simulator information improves the certificate in Section~\ref{sec:main:bounds}. To the best of our knowledge, we provide the \emph{first} wealth-regret bounds for this class of sim-to-real betting methods (note existing analyses~\cite{kilian2026asymptotically} assume a predictive model that updates on the incoming real observations and do not apply here, Log-loss expert aggregation admits related bounds~\cite{orseau2017soft}, but through a \textsc{Prod}-style update rather than the exponential weights of Algorithm~\ref{alg:sim2real-e}). These bounds provide a precise, method-specific notion of the ``optimal'' simulator(s). More importantly, they quantify how closely any chosen bank of simulators and algorithm configuration approach this optimum and translate that gap into practical guidance for simulator design, bank construction, and hyperparameter selection. 

\textbf{Extended proofs of all theoretical claims are  included in the affiliated code base}.

\subsection{The sim-to-real betting for e-process certificate}\label{sec:main:algorithm}
Algorithm~\ref{alg:sim2real-e} details the proposed method with a bank of $k$ simulators whose performance moments $\big\{\big(\mu_k, \sigma_k^2\big)\big\}_{k=1, \ldots, K}$ may be available analytically or estimated accurately from simulation rollouts. Recall from Section~\ref{sec:intro:probci:hypothe} that an e-process can be interpreted as a sequential betting process. The main idea of Algorithm~\ref{alg:sim2real-e} is to use a scalable bank of simulators to determine how to bet, while using only real-world observations as statistical evidence. Before real-world testing begins, each simulator provides its own estimate of the system's performance. More precisely, the $k$-th simulator induces an estimated distribution of the scalar performance measure defined in Section~\ref{sec:intro}. The proposed algorithm represents this estimate through its first two moments: the simulated mean $\mu_k$, which describes the expected performance, and the simulated variance $\sigma_k^2$, which describes its variability.
\begin{algorithm}[h]
    \begin{algorithmic}[1]
\State {\bf Given:} $P$, $\big\{\big(\mu_k, \sigma_k^2\big)\big\}_{k=1, \ldots, K}$, $\psi: \mathcal{X} \to \M$, $T \in \mathbb{Z}$, $\eta>0$, $\alpha \in (0,1)$, $\kappa\in(0,1]$
\State {\bf Initialize:} $L_0^k=0, \forall k \in \mathbb{Z}_K$, $M_0(\mu)=1, \forall \mu \in \M$
\State {\bf For} $t = 1, 2, \ldots, T$:
\State \ \ \ \ Compute
\small{
\begin{equation}\label{eq:sim2real-e-weights}
    \pi_t^k\!=\!\frac{\exp\big(\eta L_{t-1}^k\big)}{\sum_{j=1}^K\!\exp\big(\eta L_{t-1}^j\big)},\!m_t\!=\!\sum_{k=1}^K\!\pi_t^k \mu_k,\!v_t\!=\!\sum_{k=1}^K\!\pi_t^k\sigma_k^2
\end{equation}
}
\State \ \ \ \ Choose the stake
\begin{equation}\label{eq:sim2real-e-stake}
\lambda_t(\mu)\!=\!\text{clip}_{\delta}\big\{ \kappa(m_t-\mu) / \big[v_t + (m_t-\mu)^2\big] \big\}, \forall \mu \in \M
\end{equation}
\State \ \ \ \ Sample $x_t \sim P$ i.i.d., $y_t = \psi(x_t)$
\State \ \ \ \ Update wealth
\begin{equation}\label{eq:sim2real-e-evidence}
    M_t(\mu) = M_{t-1}(\mu)[1 + \lambda_t(\mu)(y_t-\mu)], \forall \mu \in \M
\end{equation}
\State \ \ \ \ Update score
\begin{equation}\label{eq:sim2real-e-score}
    L_t^k = L_{t-1}^k+\log{\mathcal{N}(y_t;\mu_k, \sigma_k^2)}, \forall k \in \mathbb{Z}_K
\end{equation}
\State \ \ \ \ Update certificate $\C_t$ per Remark~\ref{rmk:e-certificate}
\State {\bf Output:} $\{\C_t\}_{t=1,\ldots,T}$
\end{algorithmic}
    \caption{Sim-to-Real Betting for E-Process Certificate}
    \label{alg:sim2real-e}
\end{algorithm}

As real-world observations arrive at each round $t$, they serve two related purposes. First, they update the trust assigned to each simulator according to how well its predicted performance distribution explains the observations (line~4). Second, they provide the only observations used to update the e-process wealth (line~5-7). At each round, before the next real-world observation is revealed, the algorithm combines the simulator means and variances according to their current trust weights. The resulting mean and variance are then used to determine the direction and magnitude of a predictable bet against each candidate value $\mu\in[0,1]$.

Algorithm~\ref{alg:sim2real-e} returns $\{\C_t\}_{t=1,\ldots,T}$, a sequence of certificates. Their quality can be evaluated along two distinct dimensions. The first is \emph{\textbf{coverage}}: whether the sequence is statistically valid, i.e., containing the real-world mean $\mu^*$ at every round with the prescribed probability. The second is \emph{\textbf{tightness}}: subject to maintaining coverage, whether the resulting certificates are sufficiently narrow, as measured by their widths. For each property, one other important question is whether, and in what way, it depends on the design of the simulator bank and the hyperparameters of Algorithm~\ref{alg:sim2real-e}.

\subsection{The coverage guarantee}\label{sec:main:coverage}
This section studies the ``coverage'' property as dictated by Proposition~\ref{prop:coverage}.
\begin{proposition}\label{prop:coverage}[Anytime valid coverage]
    The confidence sequence returned by Algorithm~\ref{alg:sim2real-e} satisfies
    \begin{equation}\label{eq:coverage}
        \mathbb{P}(\mu^* \in \C_t \forall t \in \mathbb{Z}_T) \geq 1-\alpha.
    \end{equation}
\end{proposition}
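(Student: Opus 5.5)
The plan is to reduce the claim to Ville's inequality~\eqref{eq:ville} applied to the single process $M_t(\mu^*)$ at the true mean. The key observation is that, by construction of $\C_t$ in Remark~\ref{rmk:e-certificate}, the event $\{\mu^*\notin\C_t \text{ for some } t\in\mathbb{Z}_T\}$ is contained in $\{\max_{t\le T} M_t(\mu^*)\ge 1/\alpha\}$. Every $\mu$ with running maximum wealth below $1/\alpha$ lies between the infimum and supremum defining the hull, so it lies in $\C_t$. Hence the complement event has probability at most $\mathbb{P}(\sup_{t\ge1}M_t(\mu^*)\ge 1/\alpha)$. If at the true mean we use the running supremum up to the current $t$ (as the algorithm only observes up to $t$), monotonicity in $t$ of the rejection set makes this inclusion simultaneous over all $t\in\mathbb{Z}_T$.

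Next, I verify that $M_t(\mu^*)$ is a nonnegative martingale with respect to $\mathcal{F}_{t}=\sigma(y_1,\ldots,y_t)$.

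First, predictability. The weights $\pi_t^k$ in~\eqref{eq:sim2real-e-weights} depend only on $L_{t-1}^k$. These scores are functions of $y_1,\ldots,y_{t-1}$ and of the fixed, pre-specified simulator moments $(\mu_k,\sigma_k^2)$. Hence $m_t$, $v_t$, and the stake $\lambda_t(\mu^*)$ in~\eqref{eq:sim2real-e-stake} are $\mathcal{F}_{t-1}$-measurable. The simulator bank is chosen before real testing and uses no real data, so its quality is irrelevant to validity. If the moments are estimated from simulation rollouts, I treat them as independent of the real samples and condition on them.

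Second, positivity. The clip~\eqref{eq:delta-clip} gives $\lambda_t(\mu^*)\in[-\frac{1-\delta}{1-\mu^*},\frac{1-\delta}{\mu^*}]$. Combined with $y_t\in[0,1]$, this yields $1+\lambda_t(\mu^*)(y_t-\mu^*)\ge\delta>0$. Check both signs: for $\lambda\ge0$ the worst case is $y_t=0$, giving $1-\lambda\mu^*\ge\delta$; for $\lambda<0$ the worst case is $y_t=1$, giving $1+\lambda(1-\mu^*)\ge\delta$.

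Third, the martingale property. Since $x_t$ is i.i.d.\ from $P$ and independent of $\mathcal{F}_{t-1}$, we have $\mathbb{E}[y_t-\mu^*\mid\mathcal{F}_{t-1}]=0$. Therefore $\mathbb{E}[M_t(\mu^*)\mid\mathcal{F}_{t-1}]=M_{t-1}(\mu^*)$, with $M_0=1$. Ville's inequality~\eqref{eq:ville} then gives $\mathbb{P}(\sup_t M_t(\mu^*)\ge1/\alpha)\le\alpha$, and taking complements yields~\eqref{eq:coverage}.

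The main obstacle is the first step, reconciling the set-theoretic definition of $\C_t$ with the algorithm's computation. Remark~\ref{rmk:e-certificate} writes $\sup_{t\ge1}$, but at round $t$ only $M_1,\ldots,M_t$ are available, and the certificate is computed by grid search and bisection. I would define the ideal $\C_t$ as the hull of $\{\mu:\max_{s\le t}M_s(\mu)<1/\alpha\}$, so that $\C_t\supseteq\{\mu:\max_{s\le t}M_s(\mu)<1/\alpha\}$. Coverage needs only this containment, not the hull's shape. I would then note that numerical refinement must be done conservatively: outer rejection boundaries are reported at the non-rejected side of each bracket, so the containment is preserved. Finally, the edge cases $\mu^*\in\{0,1\}$, where a clip bound is infinite, need a separate check. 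There $y_t=\mu^*$ almost surely, so the wealth factor equals $1$ and the argument is unaffected.
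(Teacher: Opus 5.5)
Your proposal is correct and follows the same route as the paper. Under the true mean, the predictable, clipped stakes make $M_t(\mu^*)$ a nonnegative (super)martingale with $M_0(\mu^*)=1$, so Ville's inequality bounds the probability that $\mu^*$ is ever rejected by $\alpha$, and the hull in Remark~\ref{rmk:e-certificate} contains every non-rejected candidate.

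Your extra care about the running maximum, the $\delta$-positivity check and the cases $\mu^*\in\{0,1\}$ fills in details that the paper's one-paragraph proof leaves implicit. One claim is slightly too strong: conservative bisection alone does not guarantee containment, because the non-rejected set need not be an interval and a coarse grid could miss a non-rejected island beyond the outermost non-rejected grid points. That is a numerical issue outside the proposition as stated.
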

That is, Algorithm~\ref{alg:sim2real-e} guarantees the simultaneous coverage of $\{\C_t\}_{t=1,\ldots,T}$, regardless of the accuracy of the simulator bank. This provides an important practical convenience: an experimenter may continuously inspect the certificates and select a reporting or stopping round based on the observed data without invalidating the coverage guarantee. Moreover, the ``quality'' of the simulators does not affect coverage. Even an inaccurate simulator bank produces a valid confidence sequence, although the resulting certificates may not be as tight as desired, which leads to our next topic.

\subsection{The tightness guarantee}\label{sec:main:bounds}
To address the second question on \emph{tightness} raised in the end of Section~\ref{sec:main:algorithm}, recall the standard inversion of e-process tests from Remark~\ref{rmk:e-certificate}, for each candidate $\mu\in[0,1]$, the e-process accumulates evidence against that candidate, which is rejected once its wealth reaches $1/\alpha$. The certificate is formed by the candidates that have not been rejected. Consequently, faster wealth growth against false candidates generally leads to their earlier exclusion and hence tighter certificates. \textbf{The study of the \emph{tightness} of $\C$ hence gets transferred to the study of the \emph{wealth} accumulation.}

\subsubsection{Defining the wealth regret}\label{sec:main:bounds:regret}

Recall the ideal Kelly oracle in determining the optimal stake, for $(\mu^{*}, \sigma^2)$, we have the ideal stake as $\lambda^*(\mu)$ and the ideal wealth as $M_t^*(\mu)$ (i.e., the true-moment oracle wealth at step $t$). To facilitate the wealth regret bound analysis, we consider the ideal moments $(\mu^{*}, \sigma^2)$ as an external benchmark of Algorithm~\ref{alg:sim2real-e}.

Recall the e-process discussed in~\ref{para:evalue}, for a candidate $\mu\in\mathcal{M}$ after $t$ observed measures from the real distribution $P$, Algorithm~\ref{alg:sim2real-e} has wealth $M_t(\mu)$, while ideal Kelly has $M_t^*(\mu)$ on the same observations. We define the \emph{wealth regret} as the logarithm of their ratio, i.e.,
\begin{equation}\label{eq:wealth-regret-base}
    \mathcal{R}_t(\mu) = \log M_t^*(\mu) - \log M_t(\mu).
\end{equation}
Its positive part counts only cases where Algorithm~\ref{alg:sim2real-e} has less wealth as
\begin{equation}
    \mathcal{R}_t^+(\mu) = \max{\{ \mathcal{R}_t(\mu), 0 \}} \text{ and } \overline{\mathcal{R}}_t = \sup_{\mu\in\mathcal{M}} \mathcal{R}_t^+(\mu).
\end{equation}
For example, if ideal Kelly's wealth is twice Algorithm~\ref{alg:sim2real-e}'s wealth, the regret is $\log2$, and if Algorithm~\ref{alg:sim2real-e}'s wealth is twice ideal Kelly's wealth, the regret is $-\log2$, and the positive regret $\mathcal{R}_t^+(\mu)$ further sets it to zero. $\overline{\mathcal{R}}_t$ then selects the candidate with the largest positive regret. Dividing by the round number $t$, i.e., $\overline{\mathcal{R}}_t/t$, normalizes the total log-wealth difference by the number of real-world observations. For example, if ideal Kelly has twice the wealth at the worst candidate after $t=100$ observations, then $\overline{\mathcal{R}}_{100}/100=\log(2)/100\approx0.00693$, which is the average rate at which the log-wealth difference accumulates over the 100 observations.

In this paper, a \emph{wealth-regret bound} ultimately means a bound on the expected average worst-candidate positive regret, i.e.,
\begin{equation}\label{eq:exp-wealth-regret}
\mathbb{E}\left[\overline{\mathcal{R}}_t\right]/t.    
\end{equation}
Here, $\overline{\mathcal{R}}_t$ selects the candidate mean for which Algorithm~\ref{alg:sim2real-e} falls furthest behind ideal Kelly in log wealth, the expectation averages this quantity over possible real-world observation sequences (from $P$), and division by $t$ normalizes the accumulated log-wealth difference by the number of real-world observations. 

In the remainder of this section, we derive two complementary upper bounds on~\eqref{eq:exp-wealth-regret}. Each bound provides a different perspective on what constitutes an effective simulator bank and design principles of Algorithm~\ref{alg:sim2real-e}.

\begin{figure}[b]
    \centering
    \includegraphics[width=\linewidth]{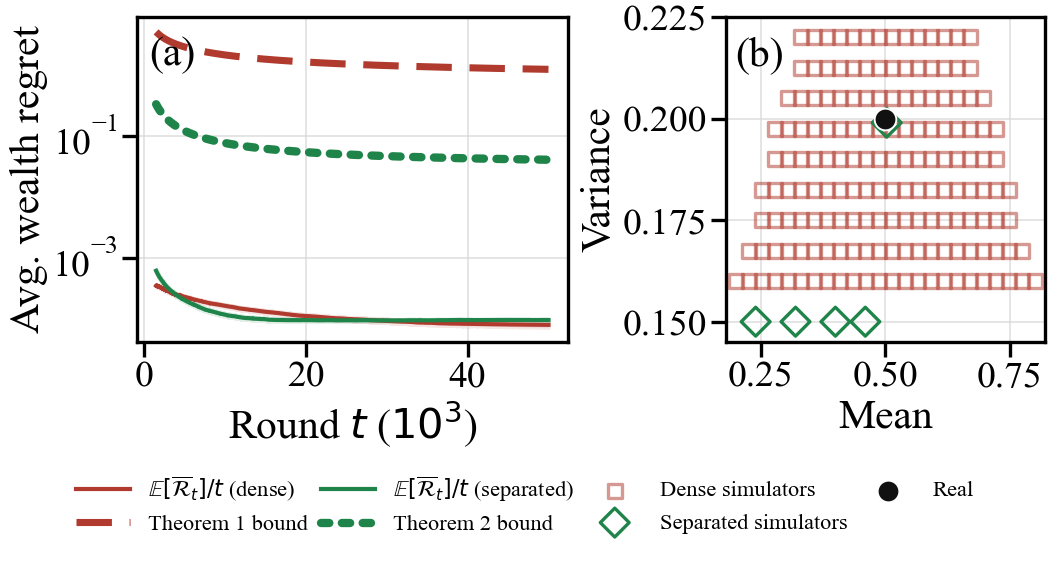}
    \vspace{-5mm}
    \caption{Illustration of Theorem~\ref{thm:general} and \ref{thm:separation-regret} on the same synthetic real distribution: (a) The expected average wealth regret, $\mathbb{E}[\overline{\mathcal{R}}_t]/t$, for dense and separated simulators, together with the corresponding theorem bounds. (b) The mean and variance of the real distribution and each simulator. The two simulator sets yield similar regret while satisfying the different structures illustrated by the two theorems.}
    \label{fig:theory-demo}
\end{figure}

\subsubsection{The expected wealth-regret bound in general}
The first bound requires no additional structural assumptions on the simulator bank beyond those stated in Algorithm~\ref{alg:sim2real-e}. We begin with a lemma that connects moment-estimation error to wealth regret.

\begin{lemma}\label{lma:moment-to-wealth-stability}[Moment-to-wealth stability]
    Consider Algorithm~\ref{alg:sim2real-e}, let $v_{\min}=\min \big\{\sigma^2, \sigma_1^2, \ldots, \sigma_K^2\big\}$, $a_m := \frac{\kappa}{\delta v_{\min}}$, $a_v:=\frac{9\kappa}{16\sqrt{3}\delta v_{\min}^{3/2}}$, $c_{\text{stab}}:=\sqrt{a_m^2 + a_v^2}$. For all $t \in \mathbb{Z}_T$ and for all $\mu \in \M$,
    \begin{subequations}
        \begin{align}
            \mathcal{R}_t^+(\mu) \leq \sum_{\tau=1}^t \big(a_m |m_{\tau} - \mu^*| + a_v \big|v_{\tau} - \sigma^2\big|\big),\label{eq:moment-to-wealth-stability-1} \\
            \overline{\mathcal{R}}_t \leq c_{\text{stab}} \sum_{\tau=1}^t \norm{\big(m_{\tau}-\mu^*, v_{\tau}-\sigma^2\big)}.\label{eq:moment-to-wealth-stability-2}
        \end{align}
    \end{subequations}
\end{lemma}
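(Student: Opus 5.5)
The plan is to bound the per-round log-wealth difference by the difference in stakes, and then bound the difference in stakes by the moment-estimation error. Unrolling~\eqref{eq:sim2real-e-evidence} and the analogous recursion for ideal Kelly on the same observations gives
\begin{equation*}
\mathcal{R}_t(\mu)=\sum_{\tau=1}^t \Big[\log\big(1+\lambda^*(\mu)(y_\tau-\mu)\big)-\log\big(1+\lambda_\tau(\mu)(y_\tau-\mu)\big)\Big].
\end{equation*}
Since $\mathcal{R}_t^+(\mu)\le|\mathcal{R}_t(\mu)|$, it suffices to bound each summand in absolute value.

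\textbf{Step 1 (both factors are at least $\delta$).} For any stake $\lambda$ in the clipped range of~\eqref{eq:delta-clip} and any $y\in[0,1]$, the factor satisfies $1+\lambda(y-\mu)\ge\delta$. Indeed, if $\lambda\ge 0$ then $y-\mu\ge-\mu$ and $\lambda\le(1-\delta)/\mu$. If $\lambda<0$ then $y-\mu\le 1-\mu$ and $|\lambda|\le(1-\delta)/(1-\mu)$. Both $\lambda^*(\mu)$ and $\lambda_\tau(\mu)$ are $\operatorname{clip}_\delta$ outputs, so the bound applies to both. On $[\delta,\infty)$ the logarithm is $1/\delta$-Lipschitz, and $|y_\tau-\mu|\le 1$. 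Hence each summand is at most $|\lambda^*(\mu)-\lambda_\tau(\mu)|/\delta$.

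\textbf{Step 2 (stake is Lipschitz in the moments).} Clipping is $1$-Lipschitz, so it remains to control the raw map $f(m,v)=\kappa d/(v+d^2)$ with $d=m-\mu$. Its partial derivatives are
\begin{equation*}
\partial_m f=\kappa\frac{v-d^2}{(v+d^2)^2},\qquad \partial_v f=-\kappa\frac{d}{(v+d^2)^2}.
\end{equation*}
Because $|v-d^2|\le v+d^2$, we get $|\partial_m f|\le\kappa/(v+d^2)\le\kappa/v$. For $\partial_v f$, maximizing $|d|/(v+d^2)^2$ over $d$ gives the maximizer $d^2=v/3$. The maximum value is $\sqrt{v/3}\,\big/\,(16v^2/9)=\tfrac{9}{16\sqrt3}v^{-3/2}$, so $|\partial_v f|\le\tfrac{9\kappa}{16\sqrt3}v^{-3/2}$.

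\textbf{Step 3 (mean value theorem along the segment).} Apply the mean value theorem on the segment from $(m_\tau,v_\tau)$ to $(\mu^*,\sigma^2)$. The quantity $v_\tau$ is a convex combination of the $\sigma_k^2$, so every point on the segment has $v\ge v_{\min}$. Both partials are decreasing in $v$, so the derivative bounds hold with $v_{\min}$ uniformly along the segment. Combining with Step 1's factor $1/\delta$ yields exactly $a_m|m_\tau-\mu^*|+a_v|v_\tau-\sigma^2|$ per round. Summing over $\tau$ gives~\eqref{eq:moment-to-wealth-stability-1}.

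\textbf{Step 4 (the worst-candidate bound).} Applying Cauchy--Schwarz to each summand, $a_m x+a_v y\le c_{\text{stab}}\norm{(x,y)}$. The resulting right-hand side does not depend on $\mu$, so taking $\sup_\mu$ gives~\eqref{eq:moment-to-wealth-stability-2}.

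The main obstacle is Step 1, namely verifying carefully that $\operatorname{clip}_\delta$ keeps both wealth factors bounded below by $\delta$ uniformly in $\mu$, including the edge cases $\mu\in\{0,1\}$ where one clip bound is infinite. The other delicate point is the maximization producing the constant $9/(16\sqrt3)$; the rest is routine.
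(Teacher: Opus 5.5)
Your proposal is correct and takes essentially the same route as the paper's proof: bound the raw-stake partial derivatives (including the maximizer $d^2=v/3$ that gives $\tfrac{9}{16\sqrt3}v^{-3/2}$), use that $\operatorname{clip}_\delta$ is non-expansive and keeps both wealth factors at least $\delta$ so the logarithm is $1/\delta$-Lipschitz, sum over rounds, and finish with Cauchy--Schwarz and a supremum over $\mu$. The only differences are cosmetic: you apply the mean value theorem along the straight segment from $(m_\tau,v_\tau)$ to $(\mu^*,\sigma^2)$, whereas the paper moves one coordinate at a time (mean first, then variance), and both paths stay in $\{v\ge v_{\min}\}$; your explicit use of $|y_\tau-\mu|\le 1$ also cleans up a step the paper leaves implicit.
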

The constants $a_m$ and $a_v$ measure sensitivity to mean error and variance error, and $c_{\text{stab}}$ further combines the two. To make the comparison, fix a candidate mean $\mu$ and evaluate Algorithm~\ref{alg:sim2real-e} and the ideal-Kelly benchmark on the same sequence of real observations. At round $t$, Algorithm~\ref{alg:sim2real-e} inserts its weighted pair $(m_t,v_t)$ into the stake formula, whereas ideal Kelly inserts the true pair $(\mu^*,\sigma^2)$; all other inputs are the same. The term~\eqref{eq:moment-to-wealth-stability-1} sums the effects of these mean and variance errors over the first $t$ rounds and bounds the positive wealth regret for the fixed candidate $\mu$. Because the same bound holds for every candidate mean, the term~\eqref{eq:moment-to-wealth-stability-2} also bounds the largest positive wealth regret over all candidates.

Lemma~\ref{lma:moment-to-wealth-stability} reduces the wealth-regret analysis to a more concrete question: how closely do the trust-weighted mean and variance produced by Algorithm~\ref{alg:sim2real-e} track the unknown real mean and variance? It does not, by itself, establish that these moment errors are small. The following theorem answers this question for a general simulator bank. It compares the trust-weighted simulator score with that of the best fixed simulator in the bank, converts the resulting score difference into moment error, and then applies Lemma~\ref{lma:moment-to-wealth-stability}.

\begin{theorem}\label{thm:general}[Expected wealth regret]
    Consider Algorithm~\ref{alg:sim2real-e} and Lemma~\ref{lma:moment-to-wealth-stability}. Let $v_{\max}=\max \big\{\sigma^2, \sigma_1^2, \ldots, \sigma_K^2\big\}$, $b_{\text{score}}\!=\!\frac{1}{2}\log\frac{v_{\max}}{v_{\min}} + \frac{1}{2v_{\min}}$, $c_{\text{mom}}=2v_{\max}+4v_{\max}^2$,   $\epsilon_K:=\min_{k\in\mathbb{Z}_K}\frac{1}{2}\Big[ \log\Big(\frac{\sigma_k^2}{\sigma^2}\Big) + \frac{\sigma^2 + (\mu^*-\mu_k)^2)}{\sigma_k^2} - 1 \Big]$, we have~\eqref{eq:exp-wealth-regret} obeys
    \begin{equation}\label{eq:general-regret}
\frac{\mathbb{E}\!\left[\overline{\mathcal{R}}_t\right]}{t} \leq c_{\text{stab}} \sqrt{c_{\text{mom}}\Bigg( \epsilon_K + \frac{\log K}{\eta t} + \frac{\eta b_{\text{score}}^2}{8}  \Bigg) }.
    \end{equation}
    The expectation is over the i.i.d. real stream from $P$. The supremum is over candidates at one fixed round $t$.
\end{theorem}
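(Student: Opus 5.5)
The plan is to chain three inequalities: Lemma~\ref{lma:moment-to-wealth-stability}, a moment-error-to-divergence conversion, and a standard Hedge (exponential weights) regret bound on the Gaussian log-scores in~\eqref{eq:sim2real-e-score}.

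\textbf{Reduction to mean squared moment error.} Write $e_\tau:=(m_\tau-\mu^*,v_\tau-\sigma^2)$. By~\eqref{eq:moment-to-wealth-stability-2}, $\overline{\mathcal{R}}_t/t\le c_{\text{stab}}\frac1t\sum_{\tau\le t}\norm{e_\tau}$. Cauchy--Schwarz gives $\frac1t\sum_\tau\norm{e_\tau}\le\sqrt{\frac1t\sum_\tau\norm{e_\tau}^2}$. After taking expectations, Jensen (concavity of $\sqrt{\cdot}$) gives
\[
\mathbb{E}[\overline{\mathcal{R}}_t]/t\le c_{\text{stab}}\sqrt{\tfrac1t\,\mathbb{E}\textstyle\sum_\tau\norm{e_\tau}^2}.
\]
It therefore suffices to show
\[
\tfrac1t\,\mathbb{E}\textstyle\sum_\tau\norm{e_\tau}^2\le c_{\text{mom}}\big(\epsilon_K+\tfrac{\log K}{\eta t}+\tfrac{\eta b_{\text{score}}^2}{8}\big).
\]

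\textbf{Moment error versus per-simulator divergence.} Let $q^*=\mathcal{N}(\mu^*,\sigma^2)$ and $q_k=\mathcal{N}(\mu_k,\sigma_k^2)$. Define the expected score gap
\[
D_k:=\mathbb{E}_{y\sim P}[\log q^*(y)-\log q_k(y)].
\]
A Gaussian log-density is quadratic in $y$, so this expectation depends on $P$ only through $(\mu^*,\sigma^2)$. It equals $\mathrm{KL}(q^*\|q_k)=\frac12[\log(\sigma_k^2/\sigma^2)+(\sigma^2+(\mu^*-\mu_k)^2)/\sigma_k^2-1]$, hence $\epsilon_K=\min_k D_k$.

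Since $m_\tau,v_\tau$ are $\pi_\tau$-averages, convexity gives
\[
\norm{e_\tau}^2\le\textstyle\sum_k\pi_\tau^k\big[(\mu_k-\mu^*)^2+(\sigma_k^2-\sigma^2)^2\big].
\]
I then need the per-simulator inequality $(\mu_k-\mu^*)^2+(\sigma_k^2-\sigma^2)^2\le c_{\text{mom}}D_k$, obtained in two parts.
\begin{itemize}
\item Mean part: $D_k\ge(\mu^*-\mu_k)^2/(2\sigma_k^2)$, so $(\mu_k-\mu^*)^2\le 2v_{\max}D_k$.
\item Variance part: with $r=\sigma^2/\sigma_k^2$, $D_k\ge\frac12(r-1-\log r)$, and $(\sigma_k^2-\sigma^2)^2=\sigma_k^4(r-1)^2$. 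I would lower bound $r-1-\log r$ by a multiple of $(r-1)^2$ on the bounded range of $r$, which lies in $[v_{\min}/v_{\max},v_{\max}/v_{\min}]$. Since all variances lie in $[0,1/4]$, this should yield $(\sigma_k^2-\sigma^2)^2\le4v_{\max}^2D_k$.
\end{itemize}
Together these give
\[
\mathbb{E}\,\norm{e_\tau}^2\le c_{\text{mom}}\,\mathbb{E}\textstyle\sum_k\pi_\tau^kD_k.
\]

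\textbf{Hedge step.} Take gains $g_\tau^k=\log\mathcal{N}(y_\tau;\mu_k,\sigma_k^2)$; the weights~\eqref{eq:sim2real-e-weights} are exactly exponential weights on cumulative gains. For $y\in[0,1]$, the spread $\max_kg_\tau^k-\min_kg_\tau^k$ is at most $\frac12\log(v_{\max}/v_{\min})+\frac1{2v_{\min}}=b_{\text{score}}$. The usual Hoeffding-lemma potential argument then gives, pathwise,
\[
\textstyle\sum_\tau\langle\pi_\tau,g_\tau\rangle\ge\max_k\sum_\tau g_\tau^k-\frac{\log K}{\eta}-\frac{\eta t b_{\text{score}}^2}{8}.
\]
Subtract $\sum_\tau\log q^*(y_\tau)$ from both sides and take expectations. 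Because $\pi_\tau$ is $\mathcal{F}_{\tau-1}$-measurable and $y_\tau$ is independent of it, $\mathbb{E}\langle\pi_\tau,\log q^*(y_\tau)-g_\tau\rangle=\mathbb{E}\sum_k\pi_\tau^kD_k$. Using $\mathbb{E}\max\ge\max\mathbb{E}$ on the right, we obtain
\[
\mathbb{E}\textstyle\sum_\tau\sum_k\pi_\tau^kD_k\le t\epsilon_K+\frac{\log K}{\eta}+\frac{\eta tb_{\text{score}}^2}{8}.
\]
Dividing by $t$ and chaining with the previous two steps yields~\eqref{eq:general-regret}.

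\textbf{Main obstacle.} I expect the variance part of the conversion to be the hardest step, namely getting the constant $4v_{\max}^2$. The function $r-1-\log r$ is only quadratic near $r=1$ and grows linearly for large $r$. I would first try the bound $r-1-\log r\ge(r-1)^2/(2\max(1,r))$, then absorb the factors $\sigma_k^4$ and $\max(1,r)$ using $v_{\max}\le 1/4$. If that constant does not close, I would instead use $c_{\text{mom}}$ with explicit $v_{\max}/v_{\min}$ ratios.

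A secondary check is the $b_{\text{score}}$ range computation. It requires bounding $|(y-\mu_k)^2/(2\sigma_k^2)-(y-\mu_j)^2/(2\sigma_j^2)|\le\frac1{2v_{\min}}$, using $y,\mu_k\in[0,1]$.
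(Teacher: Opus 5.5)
Your proposal is correct and follows essentially the same route as the paper. Both use a Hoeffding-lemma potential (Hedge) bound on the Gaussian log-scores, identify the expected excess score with the Gaussian KL form defining $\epsilon_K$, convert via $r-1-\log r\ge (r-1)^2/(2\max\{1,r\})$ plus convexity of the trust-weighted moments, and finish with Lemma~\ref{lma:moment-to-wealth-stability}, Jensen and Cauchy--Schwarz. The variance constant you flagged closes directly: $\sigma_k^4\max\{1,r\}=\sigma_k^2\max\{\sigma_k^2,\sigma^2\}\le v_{\max}^2$ gives $(\sigma_k^2-\sigma^2)^2\le 4v_{\max}^2 D_k$ with no appeal to $v_{\max}\le 1/4$.
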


Ignoring the multiplicative sensitivity factor $c_{\mathrm{stab}}\sqrt{c_{\mathrm{mom}}}$, the R.H.S. is separated to three primary terms. The first, $\epsilon_K$, is the expected Gaussian-loss difference between the best fixed simulator (in the bank of simulators) and the true moment pair (it is zero if the true moment is a part of the bank of simulators). If this best simulator were known beforehand, $\epsilon_K$ would be the only term inside the parentheses. In practice, Algorithm~\ref{alg:sim2real-e} does not know which simulator is the best and initially assigns trust across all $K$ simulators. The term $\log K/(\eta t)$ is the resulting initial identification cost, which decreases as more real observations are collected. The term $\eta b_{\mathrm{score}}^2/8$ accounts for fluctuations in the simulator scores while the trust weights are being updated. A larger learning rate $\eta$ reacts more aggressively to these fluctuations and therefore increases this term.

\begin{remark}\label{rmk:general-regret}[``Good'' simulators under Theorem~\ref{thm:general}] Theorem~\ref{thm:general} identifies two main factors governing the effectiveness of Algorithm~\ref{alg:sim2real-e}. First, the simulator bank should contain at least one simulator whose mean and variance are close to the corresponding real-world moments measured by $\epsilon_K$ in~\eqref{eq:general-regret}. A \emph{denser} bank thus can reduce $\epsilon_K$ by increasing the chance of including such a moment-accurate simulator, while the direct cost of enlarging the bank grows only through $\log K$. Second, the learning rate $\eta$ should balance the two related terms in~\eqref{eq:general-regret}. For a fixed round $t$, such a balance gives $\eta=\sqrt{\frac{8\log K}{t\,b_{\mathrm{score}}^2}}$. \textbf{Thus, a larger bank generally requires a moderately larger learning rate}, whereas a larger $T$ or greater variation in simulator scores calls for a smaller learning rate.
\end{remark}

\begin{figure*}
\begin{minipage}[c]{0.6\textwidth}
    \centering

\begin{subfigure}[c]{0.9\textwidth}
  \centering
  \includegraphics[width=\linewidth]{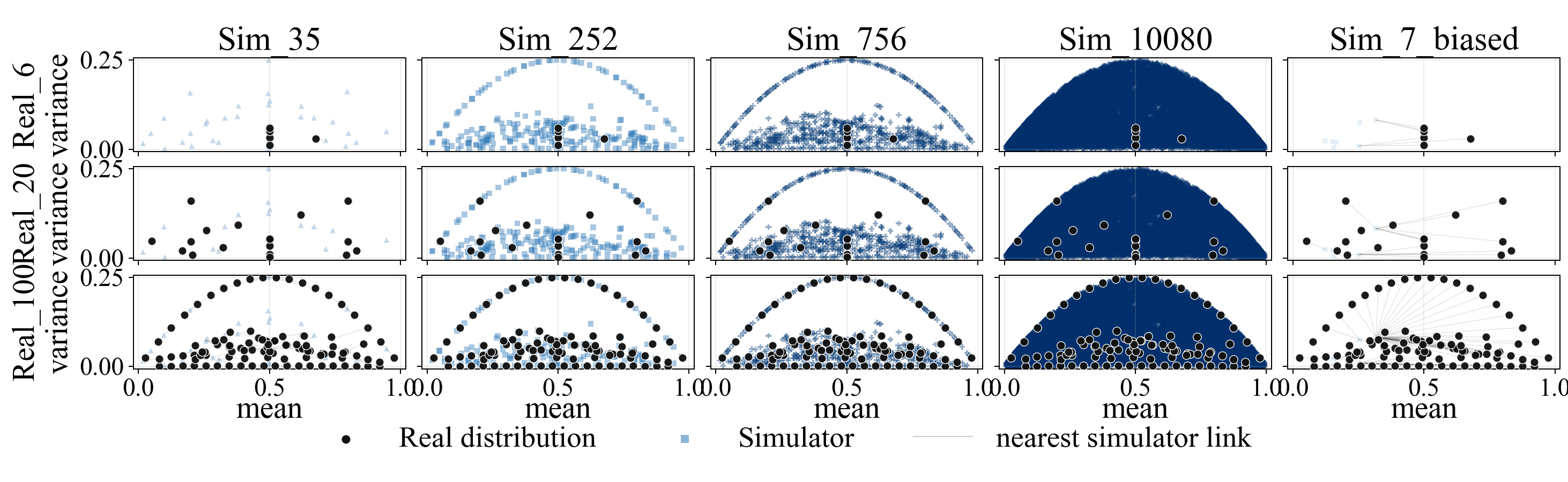}
  \caption{\textbf{C1} - synthetic examples}
  \label{fig:exp:synthetic}
\end{subfigure}

\begin{subfigure}[c]{0.9\textwidth}
  \centering
  \includegraphics[width=\linewidth,trim=20 70 20 65, clip]{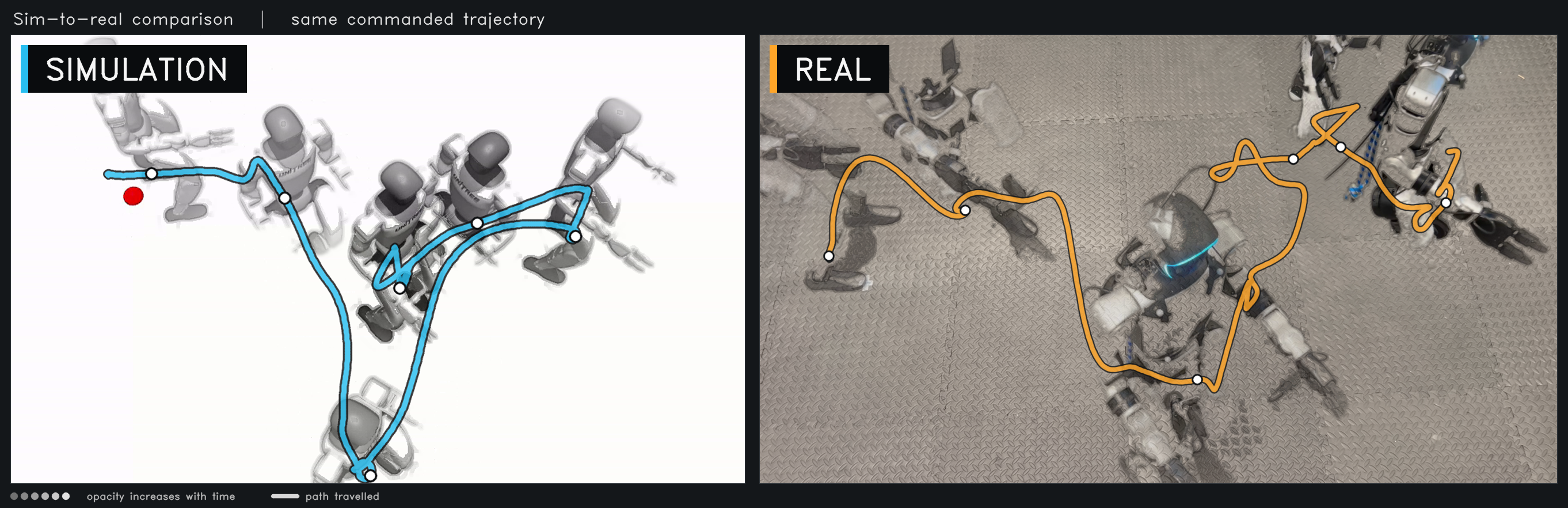}
  \caption{\textbf{C2} - command tracking}
  \label{fig:exp:gr00t}
\end{subfigure}
\end{minipage}
\hfill
\begin{subfigure}[c]{0.19\textwidth}
  \centering
  \includegraphics[width=\linewidth,trim=0 0 0 30, clip]{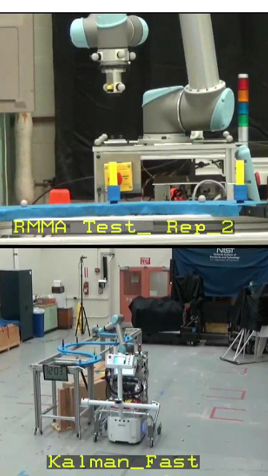}
  \caption{\textbf{C3} - peg-in-hole}
  \label{fig:exp:nist}
\end{subfigure}\hfill
\begin{subfigure}[c]{0.19\textwidth}
  \centering
  \includegraphics[width=\linewidth,trim=0 0 200 95, clip]{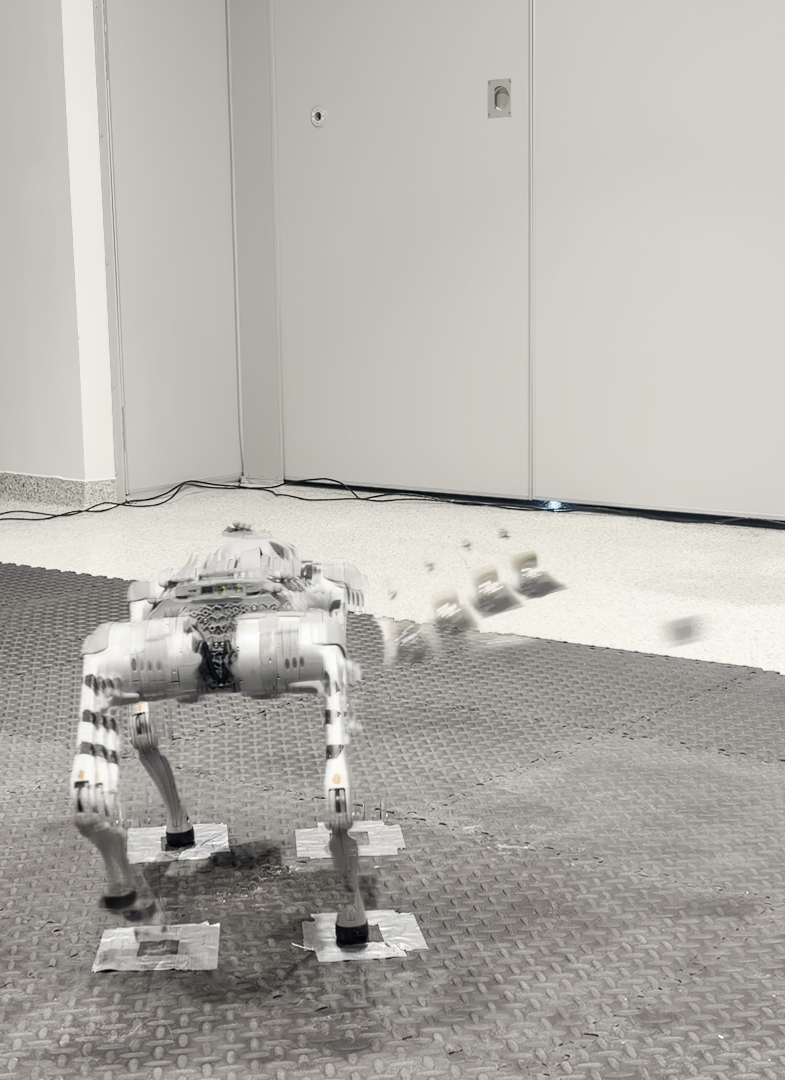}
  \caption{\textbf{C3} - pushover stability}
  \label{fig:exp:astm}
\end{subfigure}
\caption{An overview of the three experimental categories and four testing tasks is provided below. For the \emph{synthetic evaluation}, Fig.~\ref{fig:exp:synthetic} illustrates five simulator banks relative to three sets of target, or ``real'' distributions. Both the simulators and the target distributions are synthetic and are constructed from parameterized distribution families, including Bernoulli, beta, truncated-normal, bimodal-mixture, uniform-spike, and Gaussian-mixture distributions (adapted and extended from~\cite{weng26repeatability, chen2026betting}). Each individual distribution is represented as a point in the mean-variance plane, with different markers distinguishing the target distributions from the simulators. Each simulator-bank name begins with \texttt{Sim} followed by the number of distinct parameterized distributions contained in that bank. For example, \texttt{Sim\_756} contains 756 simulator distributions. An additional suffix describes a special bank construction when needed, as in \texttt{Sim\_7\_biased}. A similar naming strategy applies to the group of \texttt{Real} distributions as well.
Fig.~\ref{fig:exp:nist} presents the open-source NIST Continuous Mobile Manipulator Performance Measurement Dataset~\cite{nist24continuous}. The dataset records a wheeled mobile manipulator performing a peg-in-hole task according to standardized testing procedures.
Fig.~\ref{fig:exp:astm} presents a test conducted under ASTM Work Item WK86916~\cite{astm23test} developed by ASTM Subcommittee F45.06 to standardize the evaluation of legged robots under push-over disturbances. In this test, a Unitree Go2 robot operating with its manufacturer-provided control modules is subjected to controlled impacts delivered by a swinging-pendulum impactor. 
Fig.~\ref{fig:exp:gr00t} showcases (a simulator and a real views of) a command tracking accuracy testing task of a Unitree G1 humanoid running the GR00T~\cite{gr00tgithub} locomotion controller under joystick commands. 
}
\label{fig:exp}
\end{figure*}

\subsubsection{The expected wealth-regret bound under simulator separation}
Different from the general bound in Theorem~\ref{thm:general}, the second bound considers a particular distribution of simulator quality within the bank, where one reference simulator stands out from its competitors. By paying the price of this additional separation assumption, we can obtain a tighter wealth-regret bound. The following lemma formalizes this condition.

\begin{lemma}\label{lma:trust-concentration}[Simulator separation]
    Consider Algorithm~\ref{alg:sim2real-e} with $K \geq 2$ and $\eta>0$. Let the bank member $k' \in \mathbb{Z}_K$ as the reference. Define the relative trust and its one-round multiplier as $r_t^k = \pi_t^k/\pi_t^{k'}$ and $G_t^k = r_{t+1}^k/r_t^k$, respectively for all $k \in \mathbb{Z}_K \setminus \{k'\}$. If for some $\rho$ satisfying $\mathbb{E}\big[G_t^k\big]\leq \rho \leq 1, \forall k \in \mathbb{Z}_K \setminus \{k'\}, \forall t \in \mathbb{Z}_{T-1}$, every competitor's expected relative trust, and hence the expected total trust outside the reference, obey
    \begin{equation}\label{eq:trust-concentration}
        \mathbb{E}\big[r_t^k\big]\leq \rho^{t-1}, \mathbb{E}\big[1-\pi_t^{k'}\big]\leq \min \big\{ 1, (K-1)\rho^{t-1} \big\}.
    \end{equation}
\end{lemma}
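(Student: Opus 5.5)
The plan is to write the relative trust as a telescoping product of one-round multipliers. The key point is that each multiplier is a deterministic function of a single real observation, so the i.i.d.\ sampling in Algorithm~\ref{alg:sim2real-e} makes these multipliers independent. The expectation of the product then factors into a product of expectations, each bounded by $\rho$.

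\textbf{Step 1 (closed form of $G_t^k$).} From~\eqref{eq:sim2real-e-weights}, the normalizing constant cancels in the ratio, giving $r_t^k=\pi_t^k/\pi_t^{k'}=\exp\big(\eta(L_{t-1}^k-L_{t-1}^{k'})\big)$. By the score update~\eqref{eq:sim2real-e-score},
\begin{equation*}
G_t^k=\frac{r_{t+1}^k}{r_t^k}=\exp\Big(\eta\big[\log\mathcal{N}(y_t;\mu_k,\sigma_k^2)-\log\mathcal{N}(y_t;\mu_{k'},\sigma_{k'}^2)\big]\Big)=\left(\frac{\mathcal{N}(y_t;\mu_k,\sigma_k^2)}{\mathcal{N}(y_t;\mu_{k'},\sigma_{k'}^2)}\right)^{\eta}.
\end{equation*}
This multiplier has three properties we will use:
\begin{itemize}
\item it is strictly positive and finite, since Gaussian densities are positive and $y_t\in[0,1]$;
\item it depends on the data only through $y_t=\psi(x_t)$;
\item its functional form $g_k$ is the same for every $t$, so the hypothesis $\mathbb{E}[G_t^k]\le\rho$ is really a single condition on $\mathbb{E}[g_k(y)]$ with $y\sim\psi_\#P$.
\end{itemize}

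\textbf{Step 2 (telescoping and factorization).} Because $L_0^k=0$ for all $k$, we have $\pi_1^k=1/K$ and hence $r_1^k=1$. Therefore $r_t^k=\prod_{s=1}^{t-1}G_s^k=\prod_{s=1}^{t-1}g_k(y_s)$. The observations $y_1,\ldots,y_{t-1}$ are i.i.d.\ (line~6), so the factors are independent nonnegative random variables. It follows that $\mathbb{E}[r_t^k]=\prod_{s=1}^{t-1}\mathbb{E}[G_s^k]\le\rho^{t-1}$. This covers rounds $t\le T$, which only require multipliers indexed up to $T-1$, matching the stated range $t\in\mathbb{Z}_{T-1}$. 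The case $t=1$ holds trivially with equality.

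\textbf{Step 3 (total trust outside the reference).} We write
\begin{equation*}
1-\pi_t^{k'}=\sum_{k\ne k'}\pi_t^k=\pi_t^{k'}\sum_{k\ne k'}r_t^k\le\sum_{k\ne k'}r_t^k,
\end{equation*}
using $\pi_t^{k'}\le 1$. Taking expectations and applying Step 2 gives $\mathbb{E}[1-\pi_t^{k'}]\le(K-1)\rho^{t-1}$. Since $1-\pi_t^{k'}\in[0,1]$, we also have $\mathbb{E}[1-\pi_t^{k'}]\le 1$. Taking the smaller of the two bounds yields~\eqref{eq:trust-concentration}.

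\textbf{Main obstacle.} The only delicate point is justifying the factorization in Step 2. One must check that $G_t^k$ does not depend on the history through the weights or the stakes, and that is exactly what the closed form in Step 1 shows. The stakes $\lambda_t(\mu)$ influence the wealth $M_t(\mu)$ but never the scores $L_t^k$. If one prefers to avoid arguing independence directly, the same bound follows by conditioning on $\mathcal{F}_{t-1}$: $\mathbb{E}[r_{t+1}^k\mid\mathcal{F}_{t-1}]=r_t^k\,\mathbb{E}[g_k(y_t)]\le\rho\,r_t^k$, followed by induction on $t$.
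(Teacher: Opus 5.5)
Your proposal is correct and follows essentially the same route as the paper. Both arguments proceed through the same steps:
\begin{enumerate}
\item the closed form $r_t^k=\exp(\eta(L_{t-1}^k-L_{t-1}^{k'}))$;
\item a multiplier $G_t^k$ that depends only on the fresh observation $y_t$;
\item factorization of the expectation using the i.i.d.\ stream, starting from $r_1^k=1$;
\item the bound $1-\pi_t^{k'}=\pi_t^{k'}\sum_{k\neq k'}r_t^k\le\sum_{k\neq k'}r_t^k$, combined with the trivial bound $1$.
\end{enumerate}
The only difference is in step 3: the paper factors one round at a time by induction, $\mathbb{E}[r_{t+1}^k]=\mathbb{E}[r_t^k]\,\mathbb{E}[G_t^k]$, which is exactly your conditioning variant, whereas your main argument factors the full telescoping product at once.
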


The ratio $r_t^k$ compares competitor simulator $k$'s trust with any chosen reference's trust. The multiplier $G_t^k$ is the factor by which observing $y_t$ changes this ratio. The assumption on $\rho$ states that such factor is at most $\rho$ on average. A larger $\rho$ means that, on average, trust shifts toward the reference more slowly. Lemma~\ref{lma:trust-concentration} consequently shows that the expected trust outside the reference decreases geometrically. The following theorem combines this result with Lemma~\ref{lma:moment-to-wealth-stability} to obtain the second wealth-regret bound.

\begin{theorem}\label{thm:separation-regret}[Expected wealth regret with simulator separation]
    Consider Algorithm~\ref{alg:sim2real-e} with $K \geq 2$. Fix the reference simulator $k'$ satisfying Lemma~\ref{lma:trust-concentration} for some $\rho < 1$. Let $h_m:=|\mu_{k'}-\mu^*|, h_v:=\big|\sigma_{k'}^2-\sigma^2\big|$, $D_m:= \max_{k\in\mathbb{Z}_K}|\mu_k-\mu_{k'}|, D_v:= \max_{k\in\mathbb{Z}_K}\big|\sigma^2_k-\sigma^2_{k'}\big|$, we have
    \begin{equation}\label{eq:S_t-bound}
        S_t\!:=\!\sum_{\tau=1}^t\!\min\!\big\{\! 1,\!(\!K\!-\!1)\rho^{\tau\!-\!1} \!\big\}\!\leq\!1\!+\!\Bigg[\! \frac{\log(K\!-\!1)}{-\log \rho} \!\Bigg]\!+\!\frac{1}{1\!-\!\rho}.
    \end{equation}
    The same expected average regret as in Theorem~\ref{thm:general} obeys
    \begin{equation}\label{eq:seperation-regret}
\frac{\mathbb{E}\!\left[\overline{\mathcal{R}}_t\right]}{t} \leq a_m\Bigg( h_m + D_m \frac{S_t}{t} \Bigg) + a_v\Bigg( h_v + D_v \frac{S_t}{t} \Bigg).
    \end{equation}
\end{theorem}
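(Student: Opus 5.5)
The proof has two independent parts: the deterministic counting bound on $S_t$ in \eqref{eq:S_t-bound}, and the regret bound \eqref{eq:seperation-regret}, which follows by combining Lemma~\ref{lma:moment-to-wealth-stability} with Lemma~\ref{lma:trust-concentration}.

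\textbf{The bound on $S_t$.} Let $\tau_0$ be the smallest $\tau$ with $(K-1)\rho^{\tau-1}\le 1$, so $\tau_0-1=\lceil \log(K-1)/(-\log\rho)\rceil$. For $\tau<\tau_0$, bound each summand by $1$; these contribute at most $\lceil\log(K-1)/(-\log\rho)\rceil$. For $\tau\ge\tau_0$, the summands form a geometric tail $(K-1)\rho^{\tau_0-1}\sum_{j\ge0}\rho^j\le 1/(1-\rho)$. Using $\lceil x\rceil\le 1+\lfloor x\rfloor$ reconciles the bracket in \eqref{eq:S_t-bound} with the stated extra $1$. I read the bracket as a floor or ceiling; either way the slack of $1$ absorbs it. The case $K=2$ gives $\log 1=0$ and is trivial.

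\textbf{The regret bound.} Start from \eqref{eq:moment-to-wealth-stability-1}. Its right-hand side does not depend on the candidate $\mu$, so the supremum over $\mu$ of $\mathcal{R}_t^+(\mu)$ is bounded by the same sum. This gives
\[
\overline{\mathcal{R}}_t\le\sum_{\tau=1}^t\bigl(a_m|m_\tau-\mu^*|+a_v|v_\tau-\sigma^2|\bigr),
\]
which avoids the Euclidean combination in \eqref{eq:moment-to-wealth-stability-2}.

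Next, decompose the mean error by the triangle inequality:
\[
|m_\tau-\mu^*|\le h_m+|m_\tau-\mu_{k'}|,\qquad m_\tau-\mu_{k'}=\sum_{k\neq k'}\pi_\tau^k(\mu_k-\mu_{k'}).
\]
Since the weights sum to one, the second term is bounded by
\[
|m_\tau-\mu_{k'}|\le D_m\sum_{k\ne k'}\pi_\tau^k=D_m\bigl(1-\pi_\tau^{k'}\bigr).
\]
The variance error is handled the same way: $|v_\tau-\sigma^2|\le h_v+D_v(1-\pi_\tau^{k'})$.

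Now take expectations and apply \eqref{eq:trust-concentration}: $\mathbb{E}[1-\pi_\tau^{k'}]\le\min\{1,(K-1)\rho^{\tau-1}\}$. Summing over $\tau\le t$ yields
\[
\mathbb{E}[\overline{\mathcal{R}}_t]\le a_m(t h_m+D_m S_t)+a_v(t h_v+D_v S_t).
\]
Dividing by $t$ gives \eqref{eq:seperation-regret}.

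\textbf{Main obstacle.} Getting $\overline{\mathcal{R}}_t$, a supremum over candidates, inside the expectation is the delicate step. It works only because the per-candidate bound from Lemma~\ref{lma:moment-to-wealth-stability} is uniform in $\mu$, so the supremum is taken pathwise before any expectation. A second point to check is that Lemma~\ref{lma:trust-concentration} supplies the trust bound for every $\tau\le t$, including $\tau=1$. There $\pi_1^{k'}=1/K$, so $1-\pi_1^{k'}\le 1=\min\{1,K-1\}$ and the bound holds trivially. Its hypothesis is stated only for $t\le T-1$, but that suffices for rounds up to $T$.
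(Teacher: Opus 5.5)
Your proposal is correct and matches the paper's proof essentially step for step. Like you, the paper applies \eqref{eq:moment-to-wealth-stability-1} pathwise because it is uniform in $\mu$, decomposes $m_\tau-\mu^*$ and $v_\tau-\sigma^2$ around the reference to get $h_m+D_m(1-\pi_\tau^{k'})$ and $h_v+D_v(1-\pi_\tau^{k'})$, then invokes Lemma~\ref{lma:trust-concentration} and sums; for $S_t$, your split at $\tau_0$ gives $\lceil x\rceil+\frac{1}{1-\rho}$ with $x=\frac{\log(K-1)}{-\log\rho}$, which equals the paper's intermediate bound $1+\lceil x\rceil+\frac{\rho}{1-\rho}$ (the paper reads the bracket as a ceiling and then relaxes $\frac{\rho}{1-\rho}\le\frac{1}{1-\rho}$).
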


Recall that Remark~\ref{rmk:general-regret} characterizes a good simulator bank under Theorem~\ref{thm:general} primarily by whether it contains at least one moment-accurate simulator, which needs not stand out from the rest of the bank. Theorem~\ref{thm:separation-regret} provides a different characterization summarized by the following remark.

\begin{remark}\label{rmk:separation-regret}[``Good'' simulators under Theorem~\ref{thm:separation-regret} (and Lemma~\ref{lma:trust-concentration})]
Theorem~\ref{thm:separation-regret}, together with Lemma~\ref{lma:trust-concentration}, identifies two distinct properties of a good simulator bank. 
First, the bank must contain an accurate reference simulator whose mean and variance are close to the corresponding real-world moments. These persistent moment errors are measured by $h_m$ and $h_v$. Second, the reference must be distinguishable from every competing simulator: the expected one-round multiplier of each competitor-to-reference trust ratio must be bounded by a common $\rho<1$. Under this separation condition, trust concentrates on the reference simulator over time. A smaller $\rho$ implies faster concentration and therefore a more rapidly decreasing simulator-selection term $S_t/t$. The quantities $D_m$ and $D_v$ determine how strongly the remaining trust on competing simulators affects the aggregated moments. If the reference has exactly the real-world mean and variance, the persistent terms vanish, and the resulting expected average wealth-regret bound decreases at rate $O(1/t)$. The price of this tighter rate is the stronger separation assumption, which may fail when several simulators receive nearly identical scores.
\end{remark}

We conclude this section with a toy example demonstrating the two remarks and the related theoretical results presented to this point. 

Fig.~\ref{fig:theory-demo} applies Algorithm~\ref{alg:sim2real-e} to the same synthetic real distribution, with mean $\mu^*=0.5$ and variance $\sigma^2=0.2$, using two differently structured simulator banks. Panel~(b) visualizes their moment pairs. The \emph{dense bank} of simulators covers a broad region around the real moment pair (without overlap), illustrating Remark~\ref{rmk:general-regret}. However, its many similarly accurate simulators prevent any single simulator from clearly standing out (which fails the separation condition of Lemma~\ref{lma:trust-concentration}). In contrast, the separated bank contains one simulator that is favored over its competitors under real-world observations, illustrating the additional separation condition in Remark~\ref{rmk:separation-regret}. 

Panel~(a) shows the expected average wealth regret $\mathbb{E}[\overline{\mathcal{R}}_t]/t$ for Algorithm~\ref{alg:sim2real-e} under the two simulator banks, together with their corresponding theoretical bounds. Theorem~\ref{thm:general} is applied to the dense bank without requiring a distinguished simulator, whereas Theorem~\ref{thm:separation-regret} is applied to the separated bank and obtains a tighter bound by exploiting its stronger trust-contraction assumption. The two solid curves have similar empirical performance, emphasizing that the two results provide complementary explanations of the same algorithm under different simulator-bank structures.

\begin{figure*}
  \centering
  \begin{minipage}[c]{0.5\textwidth}
    \centering
    \begin{subfigure}[b]{\linewidth}
      \centering
      \includegraphics[width=\linewidth]{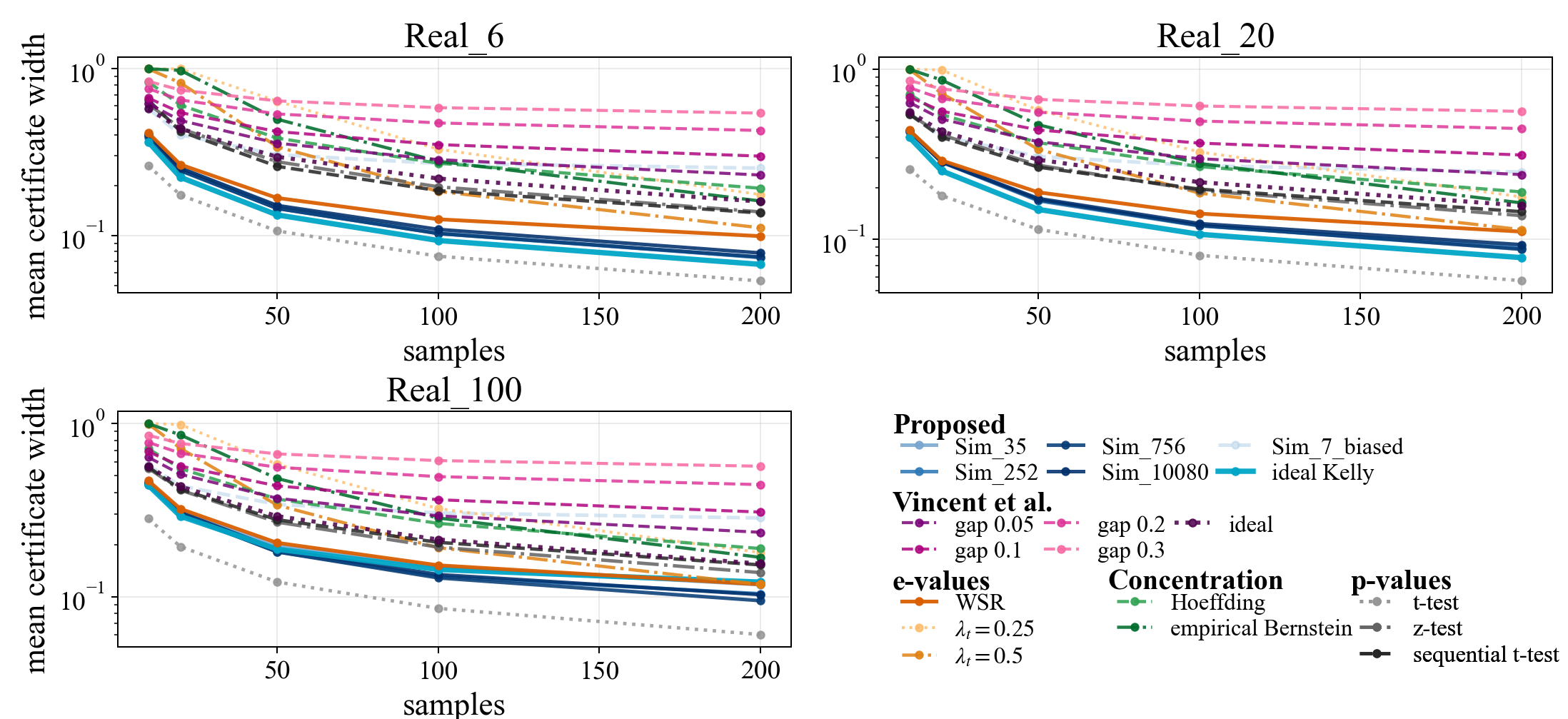}
      \caption{The evolution of the average certificate width for each method. }\label{fig:syn:width}
    \end{subfigure}
    
    \begin{subfigure}[b]{\linewidth}
      \centering
      \includegraphics[width=\linewidth]{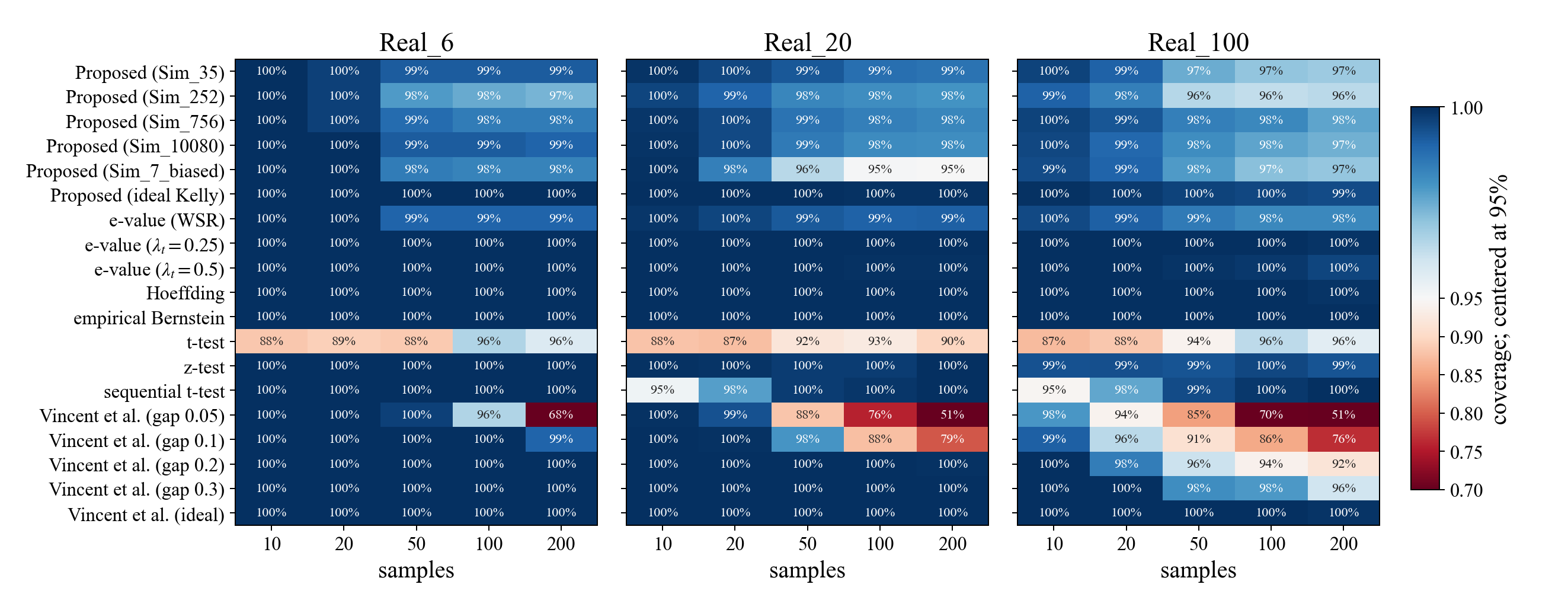}
      \caption{Empirical coverage of various methods.}
      \label{fig:syn:coverage}
    \end{subfigure}
  \end{minipage}
  \hfill
  \begin{subfigure}[c]{0.47\textwidth}
    \centering
    \includegraphics[width=0.9\linewidth,trim=0 0 0 30, clip]{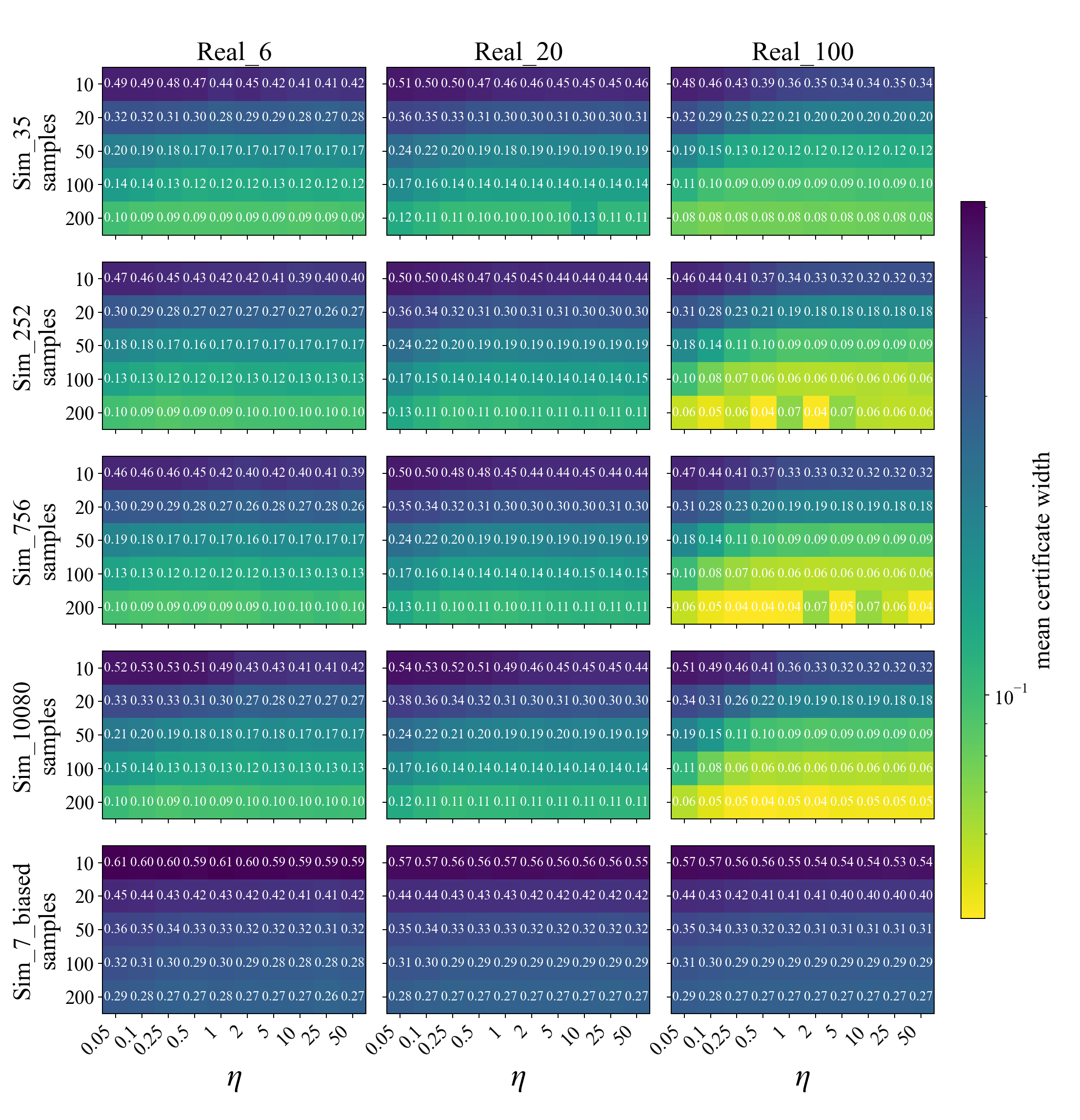}
    \caption{An empirical ablation study of $\eta$.}\label{fig:syn:eta}
  \end{subfigure}
  \caption{Empirical results of \textbf{C1}. At every reported sample size ($t$), Fig.~\ref{fig:syn:width}'s certificate widths are averaged over all distributions in the corresponding \texttt{Real} group and over 100 independently seeded trials for each distribution. Fig.~\ref{fig:syn:coverage}'s empirical coverage is shown as the observed probability for the obtained $\C_t$ covering the analytical $\mu^*$ (note the theoretical coverage is 95\% given $\alpha=0.05$.).
  Fig.~\ref{fig:syn:eta}'s width is calculated similar to that in Fig.~\ref{fig:syn:width}.}
  \label{fig:synthetic}
\end{figure*}

Bringing these insights to practical robot testing, Theorem~\ref{thm:general} and its dense-bank design principle are particularly useful when little prior information is available about which simulator, if any, will accurately represent the robot under evaluation. One example is evaluating an imitation-learning policy trained directly from real-world demonstrations when no well-matched simulator has been developed~\cite{chen2026betting,snyder26beyondbinary}. In contrast, Theorem~\ref{thm:separation-regret} is relevant when the testing operator or stakeholder already has a simulator design that approximately mirrors the real system, although its exact physical parameters remain uncertain. In this setting, domain randomization~\cite{peng18rand} can be applied within a local neighborhood of the nominal simulator to construct a focused bank of plausible variants. These conclusions resonate with the earlier sim-to-real betting and robot-testing results of Chen et al.~\cite{chen2026betting}, particularly Remark~4 and the associated experiments.

\begin{figure}
    \centering
\includegraphics[width=\linewidth,trim=0 0 0 0, clip]{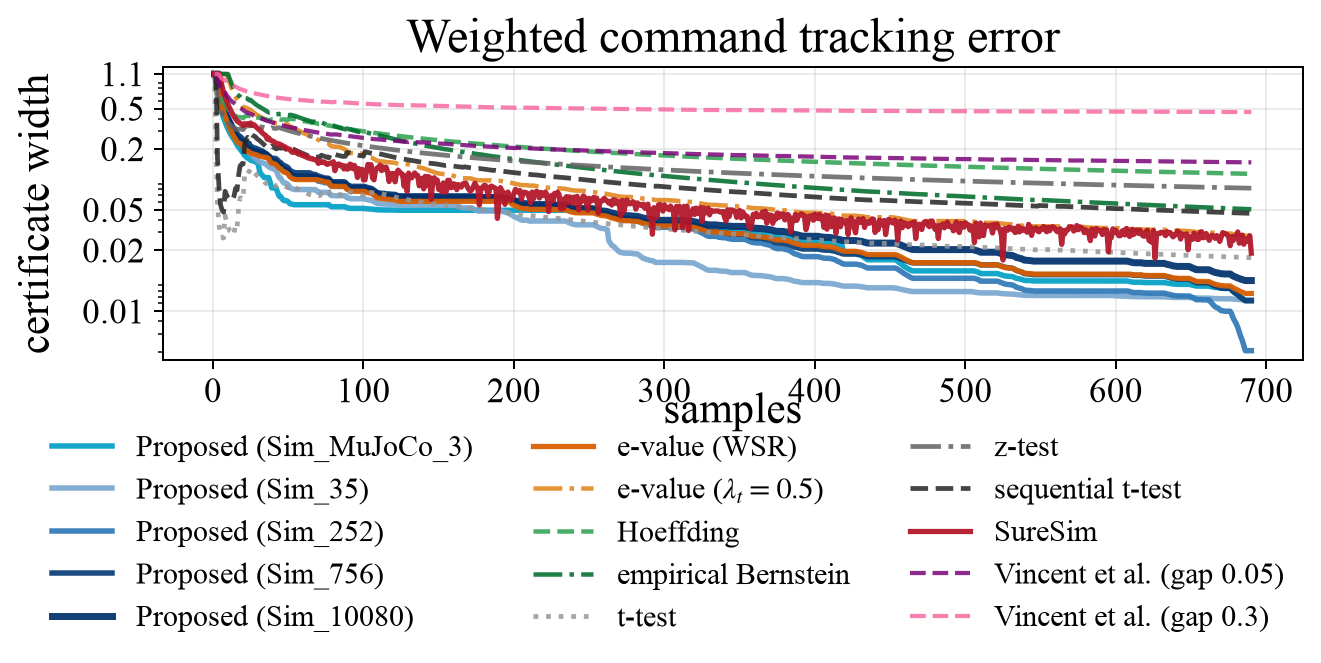}
    \caption{The evolution of the average certificate width for each method for the \textbf{C2} task. Given the command $x^c = [v_x^c, v_y^c, \omega^c]$ (i.e., desired linear velocities and yaw rate) and the measured robot state $x^r=[v_x^r, v_y^r, \omega^r]$, the weighted command tracking error takes the form of $0.25\sqrt{(v_x^c-v_x^r)^2 + (v_y^c-v_y^r)^2} + 0.5|(\omega^c-\omega^r|$.}
    \label{fig:gr00t}
\end{figure}

\begin{figure*}
\centering
\begin{subfigure}[c]{\textwidth}
  \centering
  \includegraphics[width=\linewidth,trim=0 75 0 0, clip]{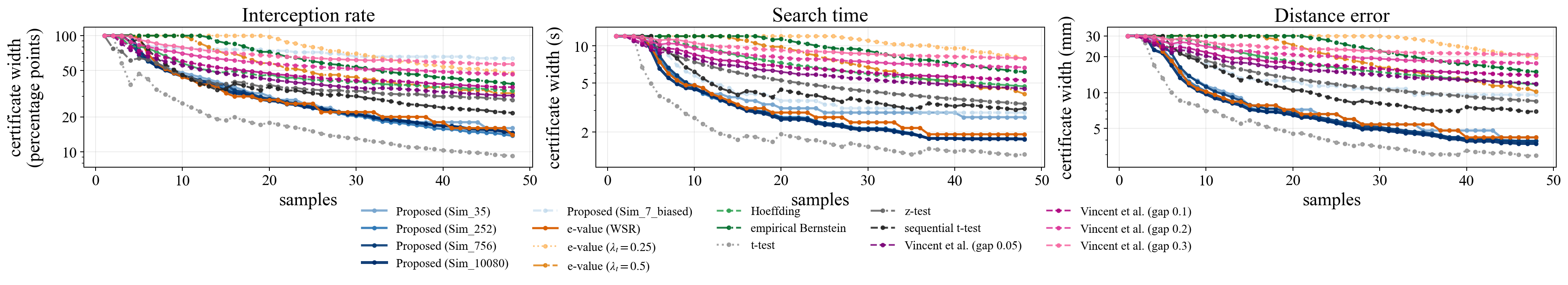}
  \caption{The evolution of the average certificate width for various performance measures in the \textbf{C3}-peg-in-hole task replaying the NIST dataset~\cite{nist24continuous}. \texttt{Interception rate} measures the fraction of trials in which the end effector successfully intercepts a reflector target serving as the non-contact proxy for peg-in-hole insertion. The \texttt{Search time} measures the expected time to complete one interception. The \texttt{Distance error} is a continuous measure complementing the binary \texttt{Interception rate} with direct measure of the $\ell_2$-norm distance between the manipulator's end effector and the target center.}
  \label{fig:nist-width}
\end{subfigure}
\begin{subfigure}[c]{\textwidth}
  \centering
  \includegraphics[width=\linewidth]{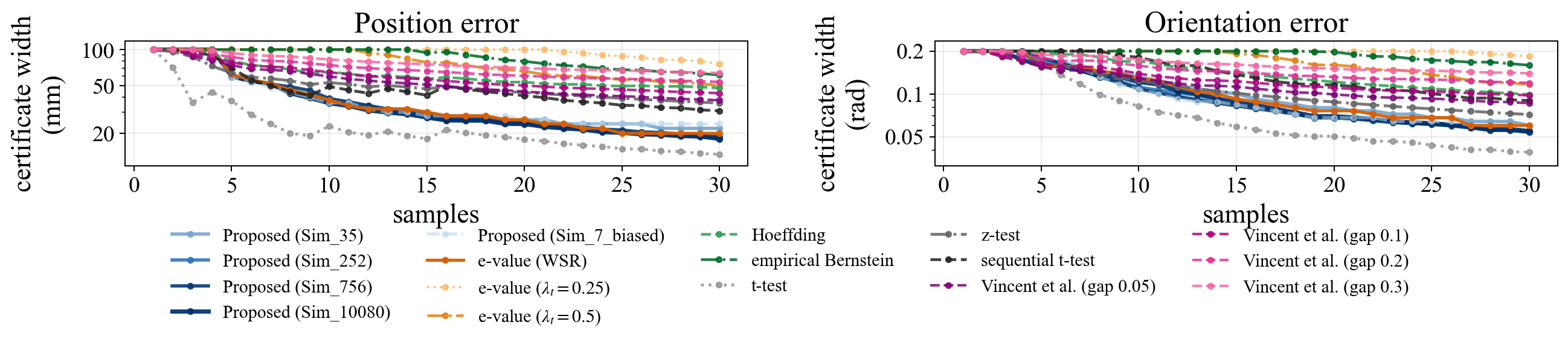}
  \caption{The evolution of the average certificate width for two performance measures in the \textbf{C3} push-over task. \texttt{Position error} measures the $\ell_2$-norm distance and \texttt{Orientation error} the absolute yaw-angle difference between the stabilized poses before and after the impact event.}
  \label{fig:astm-width}
\end{subfigure}
\caption{The evolution of the average certificate width for various methods in two different \textbf{C3} tasks.}
\label{fig:data_replay}
\end{figure*}

\section{Experiment}\label{sec:exp}

To demonstrate the proposed algorithm and empirically illustrate the theoretical insights developed in Section~\ref{sec:main}, our experimental study consists of four groups of tests organized into three categories (see Fig.~\ref{fig:exp}). 

(\textbf{C1}) The \emph{synthetic evaluation} (see Fig.~\ref{fig:exp:synthetic}) uses parameterized distributions to represent both the real-world and simulation data-generating processes. The known true real-world mean and variance allow us to directly evaluate certificate coverage and tightness and to compare the proposed method with the ideal-Kelly oracle. 

(\textbf{C2}) The \emph{command tracking} (see Fig.~\ref{fig:exp:gr00t}) deploys the proposed method to evaluate the command-following accuracy of a Unitree G1 humanoid robot controlled by GR00T~\cite{gr00tgithub}. The simulator bank is constructed in two ways, some directly adapt from \textbf{C1}, and \texttt{Sim\_MuJoCo\_3} reproduces the same i.i.d. command-following operations in MuJoCo~\cite{todorov2012mujoco} through three variants that differ in physical parameters such as joint damping. The variants are configured to exhibit separated performance, creating the trust gap required for Lemma~\ref{lma:trust-concentration} to take effect.

(\textbf{C3}) The \emph{standardized test replay} (see Fig.~\ref{fig:exp:nist} and \ref{fig:exp:astm}) consists of two studies using outcomes from prior robot tests conducted with standardized or near-standardized procedures and apparatus~\cite{astm23test,nist24continuous}. We replay the released or collected outcome sequences and apply the certificate methods to the same observations, enabling comparisons on realistic testing data without rerunning the physical experiments. Because these pre-collected datasets do not include simulator rollouts or directly matched simulators, the sim-to-real methods are supplied with synthetic simulator banks from \textbf{C1}. These banks provide auxiliary simulator information without assuming the availability of a digital twin specifically calibrated to the tested robot. 

The setup of tests are explained and illustrated in Fig.~\ref{fig:exp} with empirical results presented in Fig.~\ref{fig:synthetic}, \ref{fig:gr00t}, and \ref{fig:data_replay}. 

In summary, the proposed Algorithm~\ref{alg:sim2real-e} yields the tightest certificate across nearly all tasks and sample sizes, spanning synthetic distributions to varied performance measures of physical robot systems (including the mobile manipulator, the quadruped, and the humanoid robot). The only exception is the $t$-test, which occasionally produces narrower bounds but, due to its fixed-sample parametric nature, lacks the coverage guarantee, as theoretically supported in~\cite{Ramdas25eprocess}, empirically highlighted in Fig.~\ref{fig:syn:coverage}, and in contrast to Proposition~\ref{prop:coverage} for Algorithm~\ref{alg:sim2real-e}. 

Combine the shared comparisons across Fig.~\ref{fig:syn:width}, \ref{fig:gr00t}, \ref{fig:nist-width}, and \ref{fig:astm-width}, comparing against 9 applicable baselines (excluding SureSim (only applicable for \textbf{C2}), some ``ideal'' variants (only applicable for \textbf{C1}), and $t$- \& $z$-test (lack of coverage guarantee)), the proposed variants of Algorithm~\ref{alg:sim2real-e} achieve a certificate width reduction of $51.6\%\pm16\%$ (even the worst proposed variant, \texttt{Sim\_7\_biased}, reduces the width by $22.9\%\pm60\%$). With extremely limited samples, the proposed methods remain dominate with the width reduction level of $10\%\pm3\%$ for $\leq10$ rounds, $32.26\%\pm8\%$ for $\leq30$ rounds, and $37.3\%\pm12\%$ for $\leq50$ rounds.

The ablation study in Fig.~\ref{fig:syn:eta}, using synthetic distributions, empirically supports the insight from Theorem~\ref{thm:general} and Remark~\ref{rmk:general-regret} that a denser bank generally requires a moderately large $\eta$. On the other hand, when a paired and tunable simulation is available (e.g., with \textbf{C2}), Algorithm~\ref{alg:sim2real-e} with \emph{three} simulators (\texttt{Sim\_MuJoCo\_3}), provided they bear the appropriate separation (in alignment with Theorem~\ref{thm:separation-regret}), achieves overall performance on par with much denser and scaled synthetic banks, and even converges faster in the early stage with limited samples.

\section{Future Work}\label{sec:end}
Future work includes new sim-to-real betting mechanisms, online adaptive simulators and integration into standardized test procedures as a formal asset.

\bibliographystyle{IEEEtran}
\bibliography{output}

\clearpage
\onecolumn
\noindent{\LARGE Supplementary Material: ``Betting for Sim-to-Real Performance Certificate''}

\section{Proofs}\label{apx:prf}

\subsection{Proof for Proposition~\ref{prop:coverage}}
For a fixed candidate $\mu\in\mathcal{M}$ with $\mathcal{F}_{t-1}$ denoting the real observations available before $y_t$ is obtained, the process in Algorithm~\ref{alg:sim2real-e} is a non-negative super-martingale under the candidate null hypothesis $H_0(\mu):\mu^*=\mu$, as $M_0(\mu)=1,\  \mathbb{E}[M_t(\mu) \mid \mathcal{F}_{t-1}] \leq  M_{t-1}(\mu) (t=2,\ldots,T)\ \text{and}\  M_t(\mu)\geq0$. By applying Ville's inequality~\cite{ville1939etude} to the process, the acception and rejection of the candidates is any-time valid, so we have $\mathbb{P}(\forall t \leq T: \mu^* \in \C_t) \geq 1-\alpha$.

\subsection{Proof for Lemma~\ref{lma:moment-to-wealth-stability}}\label{apx:prf:lma:moment-to-wealth-stability}

At every round, Algorithm~\ref{alg:sim2real-e} and the true-moment benchmark observe the same real measure and test the same candidate $\mu$. They thus only differ mean and variance inserted into the stake calculation. The following proof, at a high level, takes three linked steps: (i) moment error changes the stake, (ii) stake error changes the logarithm of the one-round wealth factor, and further (iii) changes the one-round log wealth.

\begin{proof}
    
    \textbf{Step 1: How moment error changes the stake}:
    Fix a candidate $\mu \in \mathcal{M}$ and a round $\tau$. Before the $\text{clip}_{\delta}$ operation, the stake obtained from an inserted mean $m$ and variance $v$ is $\frac{\kappa(m-\mu)}{v+(m-\mu)^2}$. The absolute slope of this expression w.r.t. $m$ (holding the variance fixed) satisfies
    \begin{equation}
        \Bigg| \frac{\kappa\bigl(v-(m-\mu)^2\bigr)}{\bigl(v+(m-u)^2\bigr)^2} \Bigg| \leq \frac{\kappa}{v+(m-\mu)^2} \leq  \frac{\kappa}{v_{\min}}.
    \end{equation}
    Similarly, the absolute slope w.r.t. $v$ is 
    \begin{equation}
        \frac{\kappa|m-\mu|}{\bigl(v+(m-\mu)^2\bigr)^2} \leq \text{$\frac{9\kappa}{16\sqrt{3} v^{3/2}}$} \leq \frac{9\kappa}{16\sqrt{3} v_{\min}^{3/2}}.
    \end{equation}
    The first inequality follows because, as a function of $z=|m-\mu|\geq 0$, the L.H.S. is maximized at $z=\sqrt{v/3}$, equivalently $(m-\mu)^2=v/3$. This explains the numerical constant in $a_v$ with~\eqref{eq:moment-to-wealth-stability-1}.

    \textbf{Step 2: Combine the two coordinate changes}:
    Consider the two raw stakes (without $\text{clip}_{\delta}$), first change the inserted mean from $m_{\tau}$ to $\mu^*$ while keeping $v_{\tau}$ fixed, and then change the variance from $v_{\tau}$ to $\sigma^2$ while keeping $\mu^*$ fixed. The total change cannot exceed the sum of those two changes. Moreover, the $\text{clip}_{\delta}$ operation cannot enlarge this difference. As a result:
    \begin{equation}\label{eq:stake-diff}
        \big|\lambda_{\tau}(\mu) - \lambda_{\tau}^*(\mu)\big| \leq \frac{\kappa}{v_{\min}}|m_{\tau} - \mu^*| + \frac{9\kappa}{16\sqrt{3} v_{\min}^{3/2}} \big|v_{\tau} - \sigma^2\big|.
    \end{equation}
    
    \textbf{Step 3: Stake difference changes the one-round log wealth}: Since both clipped stakes lie in the same interval, so every $\lambda$ between them also lie in that interval. Consequently, safe clipping gives $1+\lambda(y_{\tau}-\mu)>\delta$. Therefore,
    \begin{equation}
        \Bigg| \frac{\partial}{\partial \lambda} \log \bigl[1+\lambda(y_{\tau}-\mu)\bigr] \Bigg| = \frac{y_{\tau}-\mu}{1+\lambda(y_{\tau}-\mu)} \leq \frac{y_{\tau}-\mu}{\delta}.
    \end{equation}
    Combining this fact with the stake bound~\eqref{eq:stake-diff} gives
    \begin{equation}\label{eq:one-round-log-wealth}
        \Bigg| \log \frac{1+\lambda_{\tau}^*(\mu)(y_{\tau}-\mu)}{1+\lambda_{\tau}(\mu)(y_{\tau}-\mu)} \Bigg| \leq \frac{y_{\tau}-\mu}{\delta}\Biggl(\frac{\kappa}{v_{\min}}|m_{\tau} - \mu^*| + \frac{9\kappa}{16\sqrt{3} v_{\min}^{3/2}} \big|v_{\tau} - \sigma^2\big|\Biggr).
    \end{equation}

    \textbf{Step 4: Sum the one-round bounds to obtain wealth regret}:
    Using the product definitions of $M_t^*(\mu)$ and $M_t(\mu)$, followed by~\eqref{eq:one-round-log-wealth}, gives
    \begin{equation}
        \begin{aligned}
            \mathcal{R}_t^+(\mu) &= \max\Bigg\{ \sum_{\tau=1}^t \log \frac{1+\lambda_{\tau}^*(\mu)(y_{\tau}-\mu)}{1+\lambda_{\tau}(\mu)(y_{\tau}-\mu)}, 0 \Bigg\} \\
            &\leq \sum_{\tau=1}^t \Biggl|\log \frac{1+\lambda_{\tau}^*(\mu)(y_{\tau}-\mu)}{1+\lambda_{\tau}(\mu)(y_{\tau}-\mu)}\Biggr| \\
            & \leq \sum_{\tau=1}^t (y_{\tau}-\mu)\bigl(a_m |m_{\tau} - \mu^*| + a_v \big|v_{\tau} - \sigma^2\big|\bigr)
        \end{aligned}
    \end{equation}
    The first inequality bounds the positive part of a sum by the sum of the absolute values. The second inequality applies \eqref{eq:one-round-log-wealth} at every round. This proves~\eqref{eq:moment-to-wealth-stability-1}.

    Finally, consider
    \begin{align*}
      &c_{\mathrm{stab}}^2
      \bigl[
          (m_\tau-\mu^*)^2+\big(v_\tau-\sigma^2\big)^2
      \bigr]
      -
      \bigl[
          a_m|m_\tau-\mu^*|
          +a_v\big|v_\tau-\sigma^2\big|
      \bigr]^2
      \\
      &=
      \bigl(a_m^2+a_v^2\bigr)
      \biggl[
          (m_\tau-\mu^*)^2+\big(v_\tau-\sigma^2\big)^2
      \biggr]
      -
      \biggl[
          a_m^2(m_\tau-\mu^*)^2
          +2a_ma_v|m_\tau-\mu^*|\big|v_\tau-\sigma^2\big|
          +a_v^2\big(v_\tau-\sigma^2\big)^2
      \biggr]
      \\
      &=
      a_m^2\big(v_\tau-\sigma^2\big)^2
      -2a_ma_v|m_\tau-\mu^*|\big|v_\tau-\sigma^2\big|
      +a_v^2(m_\tau-\mu^*)^2
      \\
      &=
      \bigl(a_m\big|v_\tau-\sigma^2\big|-a_v|m_\tau-\mu^*|\bigr)^2
      \ge 0.
  \end{align*}
  The first equality is due to the definition of $\overline{\mathcal{R}}_t$. Taking square roots proves~\eqref{eq:moment-to-wealth-stability-2}.

  This completes the proof.

\end{proof}

\subsection{Proof of Theorem~\ref{thm:general}}\label{apx:prf:thm:general}

\begin{proof}

    \textbf{Step 1: relate the trust-update loss to moment error}:
    Given the observed measure $y_{\tau}$ at round $\tau$, Algorithm~\ref{alg:sim2real-e} (line~8) adds $\log \mathcal{N}\big(y_{\tau};\mu_k, \sigma_k^2\big)$ to simulator $k$'s cumulative score. Thus simulator $k$'s one-round trust-update loss is $-\log \mathcal{N}\big(y_{\tau};\mu_k, \sigma_k^2\big)$, and a smaller value leads to more future trust. We have its expecatation as
    \begin{equation}
        \begin{aligned}
            \mathbb{E}\big[(y_{\tau}-\mu_k)^2\big] & = \mathbb{E}\big[(y_{\tau}-\mu^* + \mu^* - \mu_k)^2\big] \\
            & = \mathbb{E}[(y_{\tau}-\mu^*)^2] + 2(\mu^*-\mu_k)\mathbb{E}[y_{\tau}-\mu^*]  +(\mu^*-\mu_k)^2 \\
            & = \sigma^2 + (\mu^*-\mu_k)^2.
        \end{aligned}
    \end{equation}
    The first term is $\sigma^2$ by definition. The middle term is zeros as $\mathbb{E}[y_{\tau}]=\mu^*$. We can then compute simulator $k$'s expected trust-update loss with the loss obtained by the true moment oracle within the same formula as 
    \begin{equation}\label{eq:exp-trust-update-one-round}
        \begin{aligned}
            \mathbb{E} \big[ -\log \mathcal{N}\big(y_{\tau};\mu_k, \sigma_k^2\big) + \log \mathcal{N}\big(y_{\tau};\mu^*, \sigma^2\big) \big] & = \frac{1}{2}\log \frac{\sigma_k^2}{\sigma^2} + \frac{\mathbb{E}\big[(y_{\tau}-\mu_k)^2\big]}{2\sigma_k^2} - \frac{\mathbb{E}\big[(y_{\tau}-\mu^*)^2\big]}{2\sigma^2} \\
            & = \frac{1}{2}\Bigg[ \log\frac{\sigma_k^2}{\sigma^2} + \frac{\sigma^2 + (\mu^*-\mu_k)^2}{\sigma_k^2} - 1 \Bigg].
        \end{aligned}
    \end{equation}

    \textbf{Step 2: bound the trust-weighted loss}:

    Given $y_{\tau} \in [0,1]$, $\mu_k \in [0,1]$, and $v_{\min}\leq \sigma_k^2 \leq v_{\max}$, we have the uniform bound
    \begin{equation}
        \frac{1}{2}\log (2\pi v_{\min}) \leq -\log \mathcal{N}\big(y_{\tau};\mu_k, \sigma_k^2\big) \leq \frac{1}{2}\log (2\pi v_{\max}) + \frac{1}{2v_{\min}}.
    \end{equation}
    Thus the $K$ trust-update losses lie in an interval of length at most $b_{\text{score}}$ at any round. Their negatives, which Algorithm~\ref{alg:sim2real-e} (line~8) accumulates have the same range. 

    Substituting Algorithm~\ref{alg:sim2real-e}'s score update into the exponential sum gives
    \begin{equation}\label{eq:one-round-loss}
        \begin{aligned}
            & \sum_{k=1}^K \exp \big(\eta L_{\tau}^k\big) = \sum_{k=1}^K \exp \big(\eta L_{\tau-1}^k\big) \cdot \exp\big( \eta \log\mathcal{N}\big(y_{\tau};\mu_k, \sigma_k^2\big) \big) \\
            \Rightarrow & 
            \frac{\sum_{k=1}^K \exp \big(\eta L_{\tau}^k\big)}{\sum_{j=1}^K \exp \big(\eta L_{\tau-1}^j\big)} =  \sum_{k=1}^K  \frac{\exp \big(\eta L_{\tau-1}^k\big)}{\sum_{j=1}^K \exp \big(\eta L_{\tau-1}^j\big)} \cdot \exp\big( \eta \log\mathcal{N}\big(y_{\tau};\mu_k, \sigma_k^2\big)\big) = \sum_{k=1}^K   \pi_{\tau}^k \exp\big( \eta \log\mathcal{N}\big(y_{\tau};\mu_k, \sigma_k^2\big)\big) \\
            \Rightarrow & \log \frac{\sum_{k=1}^K \exp \big(\eta L_{\tau}^k\big)}{\sum_{j=1}^K \exp \big(\eta L_{\tau-1}^j\big)}= \log \sum_{k=1}^K   \pi_{\tau}^k \exp\big( \eta \log\mathcal{N}\big(y_{\tau};\mu_k, \sigma_k^2\big)\big)
            \leq  \eta \sum_{k=1}^K\pi_{\tau}^k\log \mathcal{N}\big(y_{\tau};\mu_k, \sigma_k^2\big)+\frac{\eta^2 b_{\text{score}}^2}{8}
        \end{aligned}
    \end{equation}
    The last inequality is by Hoeffding's lemma, and $\pi_{\tau}^k > 0, \sum_{k=1}^K\pi_{\tau}^k=1$.

    Sum \eqref{eq:one-round-loss} from $\tau=1$ through $t$, recall $L_0^k=0$ we have
    \begin{equation}
        \eta \max_{k\in\mathbb{Z}_K} L_t^k - \log K \leq \log \frac{\sum_{k=1}^K \exp\big(\eta L_{t}^k\big)}{K} \leq \eta \sum_{\tau=1}^t \sum_{k=1}^K \pi_{\tau}^k\log \mathcal{N}\big(y_{\tau};\mu_k, \sigma_k^2\big) + \frac{t\eta^2b_{\text{score}}^2}{8}.
    \end{equation}
    The term in the middles is the sum of the L.H.S. of~\eqref{eq:one-round-loss}. We thus have
    \begin{equation}
        \sum_{\tau=1}^t \sum_{k=1}^K \pi_{\tau}^k \big[-\log \mathcal{N}\big(y_{\tau};\mu_k, \sigma_k^2\big)\big] - \min_{k\in\mathbb{Z}_K} \sum_{\tau=1}^t \big[-\log \mathcal{N}\big(y_{\tau};\mu_k, \sigma_k^2\big)\big] \leq \frac{\log K}{\eta}+\frac{t\eta b_{score}^2}{8}.
    \end{equation}
    This compares the cumulative trust-weighted loss with the smallest cumulative loss among the bank of $K$ simulators. We now express both sides relative to the loss obtained from the true moments. At round $\tau$, conditioned on earlier observations, \eqref{eq:exp-trust-update-one-round} gives
    \begin{equation}
        \mathbb{E}\Bigg\{ \sum_{k=1}^K \pi_{\tau}^k \big[ -\log \mathcal{N}\big(y_{\tau};\mu_k, \sigma_k^2\big) + \log \mathcal{N}\big(y_{\tau};\mu^*, \sigma^2\big) \big] \Big| \mathcal{F}_{\tau-1}  \Bigg\} = \sum_{k=1}^K \frac{\pi_{\tau}^k}{2}\Bigg[ \log \frac{\sigma_k^2}{\sigma^2} + \frac{\sigma^2 + (\mu^*-\mu_k)^2}{\sigma_k^2} -1  \Bigg].
    \end{equation}
    For every simulator, the minimum over the bank is no larger than that simulator's cumulative excess loss. As a result,
    \begin{equation}
        \mathbb{E}\Bigg\{ \min_{k\in\mathbb{Z}_K} \sum_{\tau=1}^t \big[ -\log \mathcal{N}\big(y_{\tau};\mu_k, \sigma_k^2\big) + \log \mathcal{N}\big(y_{\tau};\mu^*, \sigma^2\big) \big]  \Bigg\} \leq t\epsilon_K.
    \end{equation}
    Combining the above two we have
    \begin{equation}\label{eq:exp-weighted-argument}
        \frac{1}{t}\sum_{\tau=1}^t \mathbb{E} \Bigg\{ \sum_{k=1}^K \frac{\pi_{\tau}^k}{2} \Bigg[ \log \frac{\sigma_k^2}{\sigma^2} + \frac{\sigma^2 + (\mu^*-\mu_k)^2}{\sigma_k^2} -1  \Bigg]  \Bigg\} \leq \epsilon_K+ \frac{\log K}{\eta t}+\frac{\eta b_{\text{score}}^2}{8} .
    \end{equation}

    \textbf{Step 3: convert~\eqref{eq:exp-weighted-argument} into weighted moment error}:

    Note Lemma~\ref{lma:moment-to-wealth-stability} requires the errors in the weighted mean and variance, while~\eqref{eq:exp-weighted-argument} bounds the excess trust-update loss. The following inequalities supply the conversion:
    \begin{equation}
        \sigma^2 \geq \sigma_k^2 \Rightarrow \frac{\sigma^2}{\sigma_k^2}-1-\log\frac{\sigma^2}{\sigma_k^2} = \int_{1}^{\frac{\sigma^2}{\sigma_k^2}}\frac{u-1}{u}du \geq \frac{\sigma^2}{\sigma_k^2}\int_{1}^{\frac{\sigma^2}{\sigma_k^2}}(u-1)du = \frac{\big( \sigma^2-\sigma_k^2  \big)^2}{2\sigma^2\sigma_k^2} \leq \frac{\big( \sigma^2-\sigma_k^2  \big)^2}{2\sigma^4},
    \end{equation}
    and
    \begin{equation}
        0<\sigma^2\leq\sigma_k^2 \Rightarrow \frac{\sigma^2}{\sigma_k^2}-1-\log\frac{\sigma^2}{\sigma_k^2} = \int^{1}_{\frac{\sigma^2}{\sigma_k^2}}\frac{1-u}{u}du \geq \int^{1}_{\frac{\sigma^2}{\sigma_k^2}}(1-u)du =  \frac{\big( \sigma_k^2-\sigma^2  \big)^2}{2\sigma_k^4}
    \end{equation}
    lead to
    \begin{equation}
        \frac{\sigma^2}{\sigma_k^2}-1-\log\frac{\sigma^2}{\sigma_k^2} \geq \frac{\big( \sigma^2-\sigma_k^2  \big)^2}{2\max\big\{\sigma_k^2, \sigma^2\big\}^2} \geq 0.
    \end{equation}
    Substituting the above into~\eqref{eq:exp-trust-update-one-round}, using $\sigma_k^2 \leq v_{\max}$, $\sigma^2 \leq v_{\max}$, and $c_{\text{mom}}=2v_{\max} + 4v_{\max}^2$, one has
    \begin{equation}
        \begin{aligned}
            & \frac{1}{2}\Bigg[ \log\frac{\sigma_k^2}{\sigma^2} + \frac{\sigma^2 + (\mu^*-\mu_k)^2}{\sigma_k^2} - 1 \Bigg] \geq \frac{(\mu_k-\mu^*)^2}{2v_{\max}} + \frac{\big(\sigma_k^2-\sigma^2\big)^2}{4v_{\max}^2} \\
            \Rightarrow & (\mu_k-\mu^*)^2 + \big(\sigma_k^2-\sigma^2\big)^2 \leq \frac{c_{\text{mom}}}{2}\Bigg[ \log\frac{\sigma_k^2}{\sigma^2} + \frac{\sigma^2 + (\mu^*-\mu_k)^2}{\sigma_k^2} - 1 \Bigg].
        \end{aligned}
    \end{equation}
    Because the trust weights are nonnegative and sum to one, the squared error of either weighted moment is no larger than the trust-weighted average of the corresponding individual squared errors:
    \begin{equation}\label{eq:other-cs}
        \begin{aligned}
            & (m_{\tau}-\mu^*)^2 + \big(v_{\tau}-\sigma^2\big)^2 \leq \sum_{k=1}^K \pi_{\tau}^k \Big[ (\mu_k-\mu^*)^2 + \big(\sigma_k^2-\sigma^2\big)^2 \Big] \\
            \Rightarrow & \frac{1}{t} \sum_{\tau=1}^t \mathbb{E} \Big[ (m_{\tau}-\mu^*)^2 + \big(v_{\tau}-\sigma^2\big)^2 \Big] \leq c_{\text{mom}}\Bigg(\epsilon_K+ \frac{\log K}{\eta t}+\frac{\eta b_{\text{score}}^2}{8}\Bigg).
        \end{aligned}
    \end{equation}

    \textbf{Step 4: apply Lemma~\ref{lma:moment-to-wealth-stability}}:

    \begin{equation}
        \overline{\mathcal{R}}_t \leq c_{\text{stab}} \sum_{\tau=1}^t \sqrt{(m_{\tau}-\mu^*)^2 + (v_{\tau}-\sigma^2)^2} \Rightarrow 
        \frac{\mathbb{E} \Big[\overline{\mathcal{R}}_t\Big]}{t}\leq \frac{c_{\text{stab}}}{t} \sum_{\tau=1}^t \mathbb{E}\bigg[ \sqrt{(m_{\tau}-\mu^*)^2 + (v_{\tau}-\sigma^2)^2}  \bigg].
    \end{equation}
    Applying Jensen's inequality separately at each round leads to
    \begin{equation}
        \begin{aligned}
            \mathbb{E}\bigg[ \sqrt{(m_{\tau}-\mu^*)^2 + (v_{\tau}-\sigma^2)^2}  \bigg] & \leq \sqrt{\mathbb{E}\big[ (m_{\tau}-\mu^*)^2 + (v_{\tau}-\sigma^2)^2  \big]} \\
            \Rightarrow \frac{1}{t} \sum_{\tau=1}^t\mathbb{E}\bigg[ \sqrt{(m_{\tau}-\mu^*)^2 + (v_{\tau}-\sigma^2)^2}  \bigg] & \leq \frac{1}{t} \sum_{\tau=1}^t\sqrt{\mathbb{E}\big[ (m_{\tau}-\mu^*)^2 + (v_{\tau}-\sigma^2)^2  \big]}.
        \end{aligned}
    \end{equation}
    Cauchy-Schwarz applied to the finite sum over rounds gives
    \begin{equation}\label{eq:finite-sum-cs}
        \frac{1}{t}\sum_{\tau=1}^t\sqrt{\mathbb{E}\big[ (m_{\tau}-\mu^*)^2 + (v_{\tau}-\sigma^2)^2  \big]}  \leq \sqrt{\frac{1}{t} \sum_{\tau=1}^t\mathbb{E}\big[ (m_{\tau}-\mu^*)^2 + (v_{\tau}-\sigma^2)^2  \big]}.
    \end{equation}
    Combining~\eqref{eq:finite-sum-cs} and \eqref{eq:other-cs} completes the proof.

\end{proof}

\subsection{Proof for Lemma~\ref{lma:trust-concentration}}\label{apx:prf:lma:trust-concentration}

\begin{proof}
    By Lemma~\ref{lma:trust-concentration}'s given definition:
    \begin{equation}
        r_t^k = \frac{\pi_t^k}{\pi_t^{k'}} = \frac{\exp\big(\eta L_{t-1}^k\big)}{\sum_{j=1}^K\exp\big(\eta L_{t-1}^j\big)} \frac{\sum_{j=1}^K\exp\big(\eta L_{t-1}^j\big)}{\exp\big(\eta L_{t-1}^{k'}\big)} = \frac{\exp\big(\eta L_{t-1}^{k}\big)}{\exp\big(\eta L_{t-1}^{k'}\big)}=\exp\big(\eta\big( L_{t-1}^{k}- L_{t-1}^{k'}\big)\big), 
    \end{equation}
    \begin{equation}
        \begin{aligned}
            G_t^k & = \frac{r_{t+1}^k}{r_t^k} = \exp\Big[ \eta \big( \big(L_t^{k}- L_t^{k'}\big) - \big(L_{t-1}^{k}- L_{t-1}^{k'}\big) \big) \Big]  = \exp\Big[ \eta \big( \big(L_t^{k}- L_{t-1}^{k}\big) - \big(L_t^{k'}- L_{t-1}^{k'}\big) \big) \Big] \\ &= \exp\Big[ \eta \big( \log \mathcal{N}\big(y_t;\mu_k,\sigma_k^2\big) - \log \mathcal{N}\big(y_t;\mu_{k'},\sigma_{k'}^2\big) \big) \Big]
        \end{aligned}
    \end{equation}
    Given $r_t^k$ depends on observation only before round $t$, $G_t^k$ depends on the fresh observation $y_t$, and all samples are i.i.d., we have
    \begin{equation}
        r_{t+1}^k = r_t^kG_t^k \Rightarrow \mathbb{E}\big[r_{t+1}^k\big] = \mathbb{E}\big[r_t^kG_t^k\big] = \mathbb{E}\big[r_t^k\big]\mathbb{E}\big[G_t^k\big]\leq \rho\mathbb{E}\big[r_t^k\big]. 
    \end{equation}
    Hence
    \begin{equation}
        r_1^k=1 \Rightarrow\mathbb{E}\big[r_2^k\big]\leq \rho \Rightarrow \ldots \Rightarrow \mathbb{E}\big[r_t^k\big]\leq \rho^{t-1}.
    \end{equation}
    Moreover, we have
    \begin{equation}
        \begin{aligned}
        \sum_{k\neq k', k\in \mathbb{Z}_K}r_t^k = \sum_{k\neq k', k\in \mathbb{Z}_K} \frac{\pi_t^k}{\pi_t^{k'}} = \frac{1-\pi_t^{k'}}{\pi_t^{k'}} & \Rightarrow 1-\pi_t^{k'} = \pi_t^{k'}\sum_{k\neq k', k\in \mathbb{Z}_K}r_t^k \leq \sum_{k\neq k', k\in \mathbb{Z}_K}r_t^k \\
        & \Rightarrow \mathbb{E}\big[1-\pi_t^{k'}\big] \leq \sum_{k\neq k', k\in \mathbb{Z}_K} \mathbb{E}\big[r_t^k\big] \leq (K-1) \rho^{t-1}.
        \end{aligned}
    \end{equation}
    Thus,
    \begin{equation}
        1-\pi_t^{k'} \leq 1 \Rightarrow \mathbb{E}\big[1-\pi_t^{k'} \big] \leq 1 \Rightarrow \mathbb{E}\big[1-\pi_t^{k'} \big] \leq \min\bigl(1,(K-1) \rho^{t-1}\bigr).
    \end{equation}
    This completes the proof. 
\end{proof}

\subsection{Proof for Theorem~\ref{thm:separation-regret}}\label{apx:prf:lma:trust-separation}

\begin{proof}

\textbf{Step 1: separate reference error from competitor error}:
By definition
\begin{equation}\label{eq:mean-separation}
    m_{\tau}-\mu^* = \sum_{k=1}^K \pi_{\tau}^k \mu_k - \mu^* = \sum_{k=1}^K \pi_{\tau}^k (\mu_k-\mu_{k'}) + \mu_{k'}\sum_{k=1}^K \pi_{\tau}^k - \mu^* = \mu_{k'}-u^* + \sum_{k\neq k'} \pi_{\tau}^k (\mu_k-\mu_{k'}),
\end{equation}
\begin{equation}\label{eq:var-separation}
    v_{\tau} - \sigma^2 = \sum_{k=1}^K \pi_{\tau}^k \sigma_k^2 - \sigma^2 = \sum_{k=1}^K \pi_{\tau}^k \big(\sigma^2_k-\sigma^2_{k'}\big) + \sigma^2_{k'}\sum_{k=1}^K \pi_{\tau}^k - \sigma^2  =\sigma^2_{k'}-\sigma{^2} + \sum_{k\neq k'} \pi_{\tau}^k \big(\sigma_k^2-\sigma_{k'}^2\big).
\end{equation}

\textbf{Step 2: bound the competitor contribution}:
Given \eqref{eq:mean-separation}, \eqref{eq:var-separation}, and the definitions of $h_m$, $h_v$, $D_m$, and $D_v$, apply the triangle inequality,
\begin{equation}
    |m_{\tau}-\mu^*| \leq h_m + \sum_{k\neq k'} \pi_{\tau}^k |\mu_k-\mu_{k'}| \leq h_m+D_m\sum_{k\neq k'} \pi_{\tau}^k = h_m+D_m\big(1-\pi_{\tau}^{k'}\big),
\end{equation}
\begin{equation}
    \big|v_{\tau} - \sigma^2\big| \leq h_v+\sum_{k\neq k'} \pi_{\tau}^k \big|\sigma_k^2-\sigma_{k'}^2\big|\leq h_v+D_v\sum_{k\neq k'}\pi_{\tau}^k = h_v+D_v\big(1-\pi_{\tau}^{k'}\big).
\end{equation}
Apply Lemma~\ref{lma:trust-concentration} and use the definition of $S_t$, we have
\begin{equation}\label{eq:Emdiff}
    \mathbb{E}|m_{\tau}-\mu^*| \leq h_m + D_m \mathbb{E}\big[1-\pi_{\tau}^{k'}\big] \leq h_m + D_m \min\big\{ 1,(K-1)\rho^{\tau-1} \big\} \Rightarrow \frac{1}{t}\sum_{\tau=1}^t \mathbb{E}|m_{\tau}-\mu^*| \leq  h_m+ \frac{D_m S_t}{t}.
\end{equation}
\begin{equation}\label{eq:Evdiff}
    \mathbb{E}\big|v_{\tau} - \sigma^2\big| \leq h_v + D_v \mathbb{E}\big[1-\pi_{\tau}^{k'}\big] \leq h_v + D_v \min\big\{ 1,(K-1)\rho^{\tau-1}\big \} \Rightarrow \frac{1}{t}\sum_{\tau=1}^t \mathbb{E}\big|v_{\tau} - \sigma^2\big| \leq h_v+\frac{D_v S_t}{t}
\end{equation}

\textbf{Step 3: apply Lemma~\ref{lma:moment-to-wealth-stability}}:
\begin{equation}
    \begin{aligned}
    & \overline{\mathcal{R}}_t = \sup_{\mu\in\mathcal{M}} \mathcal{R}_t^+(\mu) \leq \sum_{\tau=1}^t \big( a_m|m_{\tau}-\mu^*| + a_v\big|v_{\tau} - \sigma^2\big| \big) \\
    \Rightarrow & \frac{\mathbb{E}\big[ \overline{\mathcal{R}}_t \big] }{t} \leq \frac{a_m}{t}\sum_{\tau=1}^t\mathbb{E}|m_{\tau}-\mu^*| + \frac{a_v}{t}\sum_{\tau=1}^t\mathbb{E}\big|v_{\tau} - \sigma^2\big| \\
    \overset{\eqref{eq:Emdiff},\eqref{eq:Evdiff}}{\Rightarrow} & \frac{\mathbb{E}\big[ \overline{\mathcal{R}}_t \big] }{t} \leq a_m\Bigg(h_m+ \frac{D_m S_t}{t}\Bigg)+a_v\Bigg(h_v+\frac{D_v S_t}{t}\Bigg) .
    \end{aligned}
\end{equation}
This proves \eqref{eq:seperation-regret}.

\textbf{Step 4: Bound $S_t$}:
By the definition of $S_t$ (let $\left\lceil \cdot \right\rceil$ denote the ceiling function returning the smallest integer greater than or equal to the enclosed value):
\begin{equation}
    \begin{aligned}
    S_t & = \sum_{\tau=1}^t\min\big\{ 1,(K-1)\rho^{\tau-1}\big\}  \leq \sum_{\tau=1}^{1+ \left\lceil \frac{\log(K-1)}{-\log \rho} \right\rceil } 1 + \sum_{\tau=2+\left\lceil \frac{\log(K-1)}{-\log \rho} \right\rceil}^{\infty}(K-1)\rho^{\tau-1} \\
    & = 1 + \left\lceil \frac{\log(K-1)}{-\log \rho} \right\rceil + \frac{(K-1)\rho^{1+\left\lceil \frac{\log(K-1)}{-\log \rho} \right\rceil} }{1-\rho} \\
    & \textbf{Note: } \left\lceil \frac{\log(K-1)}{-\log \rho} \right\rceil \geq \frac{\log(K-1)}{-\log \rho} \Rightarrow (K-1)\rho^{1+\left\lceil \frac{\log(K-1)}{-\log \rho} \right\rceil} \leq \rho \\
    & \Rightarrow S_t \leq 1 + \left\lceil \frac{\log(K-1)}{-\log \rho} \right\rceil + \frac{\rho}{1-\rho} \leq 1 +\left\lceil \frac{\log(K-1)}{-\log \rho} \right\rceil + \frac{1}{1-\rho},
    \end{aligned}
\end{equation}
which proves~\eqref{eq:S_t-bound}.
This completes the proof.

\end{proof}

\section{Experiment Details}\label{apx:exp}
This section provides more details (particularly on \textbf{C2} and \textbf{C3}) complementing the main paper's empirical results in Section~\ref{sec:exp} as well as the released code base supporting the reproducibility of the presented results.

\subsection{\textbf{C2} - command tracking}
The real-world samples are collected using a customized software bridge that translates joystick actions from a Logitech F310 gamepad into GR00T compatible commands $x^c$ (linear and yaw) at 50 Hz. For each sample, the velocity error is computed as the difference between the command velocity sent through the gamepad and the actual velocity measured by the OptiTrack Motion Capture System (OTS). The lower body controller runs at a fixed control frequency (50 Hz), and the velocity measurements are sampled at the same frequency. 

To establish the paired correspondence between the real-world and simulation data required by SureSim, we replay the same commands recorded from the real-world samples in MuJoCo environment. The resulting paired velocity error is computed from the difference between the replayed command and the measured robot state $x^r$. For real-world samples, $x^r$ is obtained from the Motion Capture System, while for simulated samples, $x^r$ is obtained from the corresponding robot state in MuJoCo. For the \texttt{Sim\_MuJoCo\_3}, we construct a diverse simulation bank by varying the physical parameters of the MuJoCo environment. 

In addition, the implementation of SureSim is adapted from the opensource code~\footnote{\href{https://github.com/irom-princeton/rapid-policy-evaluation}{https://github.com/irom-princeton/rapid-policy-evaluation}}. Details of the adaptation can be found in the released code base.

\subsection{\textbf{C3} - peg-in-hole}
Continuous mobile manipulator benefits for large, curved, and complex workpieces in many industrial and unstructured environments. However, such flexibility also introduces additional source of performance uncertainty, preventing stringent pose repeatability and accuracy. To identify and quantify this uncertainty, the Configurable Mobile Manipulator Apparatus (CMMA) was developed in~\cite{nist24continuous} to provide a standardized set of target locations for evaluating the performance of the mobile manipulator during continuous motion. As illustrated in Fig.~\ref{fig:setup}, the Type B CMMA used in this study consists of two sides, each containing six retro-reflective fiducials of different sizes. The mobile manipulator system consists of an autonomous mobile robot cart transporter (AMR-CT) and a manipulator, with a Retro-reflective Laser Sensor and Emitter (RLS) mounted at the end effector of the manipulator to detect and intercept these fiducials as the mobile manipulator moves along the apparatus. The AMR-CT uses manufacturer-provided mapping and navigation software together with custom ROS-based velocity control to drive the platform along the CMMA. 

\begin{figure*}[t]
    \centering
    \includegraphics[width=\linewidth,trim=0cm 3.5cm 4cm 3.5cm,clip]{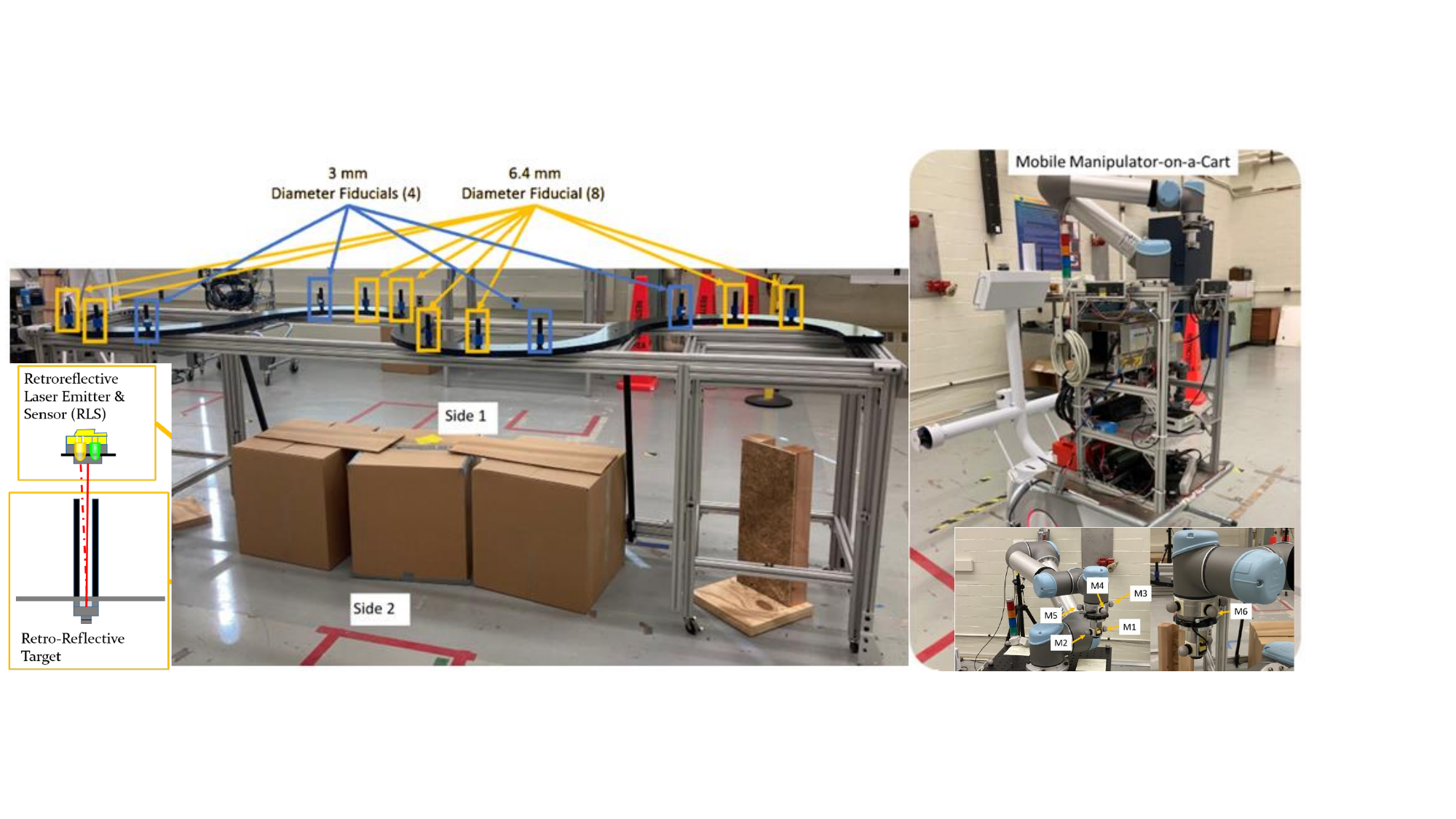}
    \caption{Overview of the experimental setup. The left side shows the Type B CMMA, while the right side shows the mobile manipulator mounted on the AMR-CT. The left corner highlights the Retro-reflective Laser Sensor and Emitter (RLS), and the right corner shows the optical tracking system (OTS) markers mounted on the end effector of the manipulator. \textit{Note: the figure is adapted from~\cite{nist24continuous}}}
    \vspace{-0.5cm}
    \label{fig:setup}
\end{figure*}

In this experiment, the fiducial-interception task serves as a standardized surrogate for a peg-in-hole-type manipulation task that the end-effector must continuously locate and align with a designed target and successfully intercept it while both the AMR-CT and the manipulator are in motion. The experiments design consisted of a full $2^3$ factorial experiment with three factors: (i) the search algorithm (varied between Deterministic Spiral Search and a stochastic Unscented Kalman Filter (UKF)-based search), (ii) the mobile-base translation speed (either $0.01~\mathrm{m/s}$ or $0.025~\mathrm{m/s}$), and (iii) the CMMA side (either Side 1 or Side 2). The order of the eight factor combinations was randomized within each replicate to reduce potential ordering effects. The original NIST study analyzed the experimental data primarily using a $2^3$ factorial Analysis of Variance, using p-values to assess the statistical significance of the metrics, along with other prior analysis. Two versions of the NIST dataset are publicly available, released in February\footnote{\href{https://data.nist.gov/od/id/mds2-3061}{https://data.nist.gov/od/id/mds2-3061}} and June\footnote{\href{https://data.nist.gov/od/id/mds2-3187}{https://data.nist.gov/od/id/mds2-3187}}, respectively. In this work, we use the June dataset which has six replicate trials for each of the eight experimental conditions, resulting in a total of $6\times8=48$ experimental runs.

The performance in this report is evaluated through two primary measurements: (i) the fiducial \texttt{interception rate}, defined as the percentage of target fiducials successfully intercepted during one run and (ii) the average fiducial \texttt{search time}, defined as the mean time required to locate and intercept an individual target, capped at the maximum time allotted to search for that target before the vehicle carries it out of range. In addition, we derived an additional continuous measurement from the recorded data: the distance between the fiducial target and end manipulator end-effector measured by OTS. Details of the computation and data processing procedure are provided in the code base. 

\subsection{\textbf{C3} - pushover}
The pushover stability test of the Unitree Go2 robot follows a standard testing method currently being developed as part of WK86916 by the subcommittee ASTM F45.06 Legged Robot Systems. 

The impactor is a 50 mm-diameter, 500 g steel ball complies with GB4943, GB4906, IEC60335, IEC60950, IEC60598, UL507, concerning the impact resistance performance of various electronic appliances. It swung as a pendulum from a constant vertical height to impact the enclosure of the subject robot. The subject is a Unitree Go2 EDU quadruped robot with commercial software stack capable of in-place standing (undisturbed) (firmware version anonymized). Across all impact tests, the subject robot starts from the same stationary foot placement ($25 \times 40$ cm) in contact with the ground. However, its body pose still varies due to the intrinsic design of the robot controller. The impact area therefore still differs from trial to trial, which contributes in part to the uncertainty of the performance measure.

\printnomenclature

\end{document}